\documentclass{article} 
\usepackage[preprint]{tmlr}
\renewcommand{\email}{\\\small\itshape}

\usepackage{amsmath,amsthm,amssymb}
\usepackage{booktabs}
\usepackage{graphicx}
\usepackage{hyperref}
\usepackage{url}
\usepackage{microtype}
\usepackage{xcolor}
\usepackage{algorithm}
\usepackage{algorithmic}
\usepackage{subcaption}
\usepackage{capt-of}
\usepackage{quantikz}
\usepackage{braket}

\newtheorem{theorem}{Theorem}
\newtheorem{proposition}{Proposition}
\newtheorem{corollary}{Corollary}
\newtheorem{lemma}{Lemma} 
\newtheorem{definition}{Definition}
\newtheorem{remark}{Remark}
\newtheorem{assumption}{Assumption}

\title{Grid-Based Initialization Resolves Frequency Reachability\\
in Trainable-Frequency Quantum Machine Learning}

\author{\name Michael Poppel \email michael.poppel@aqarios.com\\
\addr LMU Munich \& Aqarios GmbH, Munich, Germany
\AND
\name Markus Baumann \email markus.baumann@ifi.lmu.de\\
\addr LMU Munich, Munich, Germany
\AND
\name Sebastian W{\"o}lckert \email sebastian.woelckert@ifi.lmu.de\\
\addr LMU Munich, Munich, Germany
\AND
\name Claudia Linnhoff-Popien \email linnhoff@ifi.lmu.de\\
\addr LMU Munich, Munich, Germany
\AND
\name Jonas Stein \email jonas.stein@ifi.lmu.de\\
\addr LMU Munich, Munich, Germany}

\begin{document}

\maketitle

\begin{abstract}
Angle-encoded variational quantum circuits admit a truncated Fourier
series representation of their output, but approximating functions
with maximum frequency $\omega_{\max}$ using fixed unary encoding
requires $\mathcal{O}(\omega_{\max})$ encoding gates.
Trainable-frequency (TF) circuits promise a reduction by learning the
data-encoding prefactors alongside the ansatz parameters, adapting the
accessible frequency spectrum to the target during training.
We identify a practical barrier that prevents this promise from being
realized: the prefactor gradient is suppressed by the spectral gap
between the circuit's accessible frequencies and the target spectrum,
independently of the ansatz parameters, confining gradient-driven
prefactor movement to a narrow neighborhood of initialization.
We propose \emph{ternary grid initialization}---setting prefactors to
$\{1, 3, 9, \ldots, 3^{k-1}\}$---which ensures every target frequency
within $[-\omega_{\max}, \omega_{\max}]$ lies within $\tfrac{1}{2}$
unit of the accessible spectrum at initialization, so that the
spectral-gap bound no longer constrains the target-driven gradient
to be small. This is a
necessary condition for reliable convergence, whose sufficiency we
establish empirically.
On a synthetic benchmark with target frequencies shifted well beyond
the standard initialization range, ternary initialization achieves
median $R^2 = 0.997$ versus $0.18$ for unary initialization, with
$100\%$ of runs achieving $R^2 > 0.95$ against $0\%$.
CMA-ES with $20\times$ the evaluation budget reaches only $25\%$
success, confirming the limitation is a property of the optimization
landscape rather than of gradient-based optimization specifically.
Real-world validation on two benchmark datasets demonstrates consistent
advantages over both fixed and trainable unary baselines.
\end{abstract}

\section{Introduction}
\label{sec:introduction}

A key theoretical foundation for quantum machine learning is the
Fourier representation of angle-encoded variational quantum circuits
(VQCs)~\citep{Schuld_2021}: any such circuit's output admits a
truncated Fourier series representation
$f_\theta(x) = \sum_{\omega \in \Omega} c_\omega(\theta)\,e^{i\omega x}$,
where the accessible frequency set $\Omega$ is determined by the
circuit's encoding architecture and the Fourier coefficients
$c_\omega(\theta)$ are controlled by the trainable ansatz parameters
$\theta$.
This connects quantum models to harmonic analysis and makes their
expressivity precise: approximating a target function requires the
accessible spectrum $\Omega$ to cover the target's spectral support,
and \citet{PhysRevA.104.012405} and \citet{yu2022power} establish
that angle-encoded circuits can achieve this universally.
Crucially, the spectrum $\Omega$ arises directly from the quantum
circuit structure, placing Fourier-based modeling at the center of
the case for QML as a natural setting for spectral
methods~\citep{Belis_2026}.

\citet{yu2022power} established that incorporating trainable parameters
into the classical data pre-processing stage of PQCs --- including the
encoding prefactors $\alpha$ that determine the accessible frequency
spectrum --- enables universal approximation with significantly reduced
encoding gate count, whereas fixed unary encoding requires a circuit
whose size grows with the frequency range needed to approximate the
target to a given precision.
Building on this, \citet{Jaderberg_2024} proposed an explicit
trainable-frequency (TF) architecture in which prefactors are learned
alongside the ansatz parameters $\theta$ via gradient descent, adding
spectrum optimization as an additional degree of freedom alongside
coefficient learning.
Rather than pre-specifying a frequency grid, a TF circuit adapts its
accessible spectrum $\Omega(\alpha)$ to the target during training,
promising a reduction in encoding gate count from
$\mathcal{O}(\omega_{\max})$ to $\mathcal{O}(k_{\mathrm{opt}})$
for unary TF circuits, where $k_{\mathrm{opt}}$ is the number of
distinct target frequencies, independent of their magnitudes.

We identify and formally characterize a practical barrier that prevents
this promise from being realized in general: \emph{frequency
reachability}.
The gradient $\partial\mathcal{L}/\partial\alpha_j$ is driven by the
inner product between the target residual and a probe function whose
spectral support is confined to the circuit's current accessible
spectrum $\Omega(\alpha)$.
When $\Omega(\alpha)$ is spectrally distant from the target support
$\Omega_h$, this inner product is suppressed by the reciprocal of the
spectral gap $\delta(\alpha, \Omega_h)$ --- a structural consequence
of the circuit's Fourier architecture that we formalize in
Proposition~\ref{prop:gap}.
While \citet{Jaderberg_2024} demonstrate successful frequency learning
on several target functions, we show that reliable convergence requires
the target spectrum to lie within the locally reachable neighborhood
of initialization --- a condition that warrants explicit analysis and
motivates the structured initialization strategy we develop here.

To resolve the reachability limitation, we propose
\emph{ternary grid initialization}: setting prefactors to
$\{1, 3, 9, \ldots, 3^{k-1}\}$, which by the known dense coverage
property of ternary encodings~\citep{Shin_2023, Peters_2023} places
every target frequency within $[-\omega_{\max}, \omega_{\max}]$ within
$\tfrac{1}{2}$ unit of a grid point at initialization.
Every target frequency within $[-\omega_{\max}, \omega_{\max}]$
therefore lies within $\tfrac{1}{2}$ unit of the accessible spectrum
at initialization, so that
$\delta(\alpha^{(0)}, \Omega_h) \leq \tfrac{1}{2}$ and the
bound of Proposition~\ref{prop:gap} no longer constrains the
target-driven gradient to be small
(Corollary~\ref{cor:ternary}).
This holds for every seed and target without requiring prior knowledge
of the target spectrum; it is a necessary condition for reliable
convergence, whose sufficiency we establish empirically.

\paragraph{Contributions.}
\begin{enumerate}
    \item We prove that the target-driven component of the prefactor
    gradient in TF quantum circuits is suppressed by the spectral gap
    $\delta(\alpha, \Omega_h)$ between the accessible and target
    spectra, independently of the ansatz parameters $\theta$
    (Proposition~\ref{prop:gap}, Section~\ref{sec:theory}).
    \item We show empirically that reachability failure persists under
    derivative-free optimization: CMA-ES achieves only $25\%$ success
    at $20\times$ the evaluation budget, and random initialization
    spanning the target range produces a strongly bimodal outcome
    (Section~\ref{sec:experiments}).
    \item We propose \emph{ternary grid initialization} and prove that
    it voids this obstruction by construction: every in-range target
    frequency lies within $\tfrac{1}{2}$ unit of the accessible
    spectrum, so the spectral-gap bound no longer constrains the
    target-driven gradient to be small
    (Corollary~\ref{cor:ternary}). We establish its
    $\mathcal{O}(\log_3\omega_{\max})$ encoding gate complexity and
    account for total circuit resources (Section~\ref{sec:ternary}).
    \item We validate the approach on synthetic and real-world
    benchmarks, demonstrating consistent advantages over fixed and
    trainable unary baselines across all settings tested
    (Section~\ref{sec:experiments}).
\end{enumerate}

\section{Related Work}
\label{sec:related}

\paragraph{Fourier analysis of variational quantum circuits.}
The Fourier series interpretation of angle-encoded VQCs was established
by \citet{Schuld_2021}, who showed that the accessible frequency set
$\Omega$ is determined entirely by the eigenvalues of the data-encoding
generators, and that the Fourier coefficients are controlled by the
ansatz parameters $\theta$.
\citet{PhysRevA.104.012405} demonstrated that a single qubit with
repeated data re-uploading can universally approximate continuous
functions, and \citet{perez_Salinas_2025} subsequently characterized
the minimal circuit architectures achieving this, providing the gate
complexity results that we extend to the trainable-frequency setting
in Section~\ref{sec:ternary}.
\citet{holzer2024spectralinvariancemaximalityproperties} analyzed
spectral invariance and maximality properties of QNN frequency spectra,
and \citet{Belis_2026} argue more broadly that spectral methods ---
those that learn or manipulate the Fourier spectrum of a model ---
arise naturally from quantum computation, motivating the precise
understanding of frequency coverage developed here.

\paragraph{Spectral training dynamics.}
The relationship between a circuit's frequency spectrum and its
training dynamics has received increasing attention.
\citet{Duffy_2025} proved that spectral bias in PQCs --- the tendency
to learn low-frequency components faster than high-frequency ones ---
is governed by the \emph{redundancy} of each frequency: the number
of distinct terms in the circuit's Fourier decomposition contributing
to that frequency component.
Frequencies with higher redundancy receive stronger gradient signals
and converge faster, a result that holds both as an upper bound and
in expectation under small-angle initialization.
The expected gradient results build on the PQC formalism of
\citet{Wiedmann_2024}, which provides a systematic decomposition of
circuit outputs into variational polynomial terms.
Our analysis of prefactor gradients in Section~\ref{sec:theory}
applies related Fourier-analytic tools to the encoding parameters
$\alpha$ rather than the ansatz parameters $\theta$, where the
gradient structure takes a distinct form arising from the explicit
data-dependence of the encoding gates.

\paragraph{Exponential and ternary encodings.}
\citet{Shin_2023} introduced exponential data encoding for quantum
supervised learning, establishing that prefactors $\{1, 3, 9,\ldots\}$
generate a dense integer frequency spectrum covering all integers up
to $(3^k-1)/2$ with $k$ encoding gates.
\citet{Peters_2023} showed that ternary encodings achieve the maximum
frequency cardinality among separable single-qubit encoding generators.
These results establish the spectral coverage properties that underpin
the ternary grid initialization proposed here; our contribution is
identifying that the same properties resolve the reachability
limitation in the trainable-frequency setting.

\paragraph{Trainable-frequency models.}
\citet{yu2022power} established that incorporating trainable parameters
into the classical data pre-processing stage of PQCs, including the
encoding prefactors, enables universal approximation with significantly
reduced encoding gate count, though this flexibility introduces ambiguity
about whether expressivity gains originate from the classical or quantum
components of the model.
\citet{Jaderberg_2024} proposed an explicit trainable-frequency
architecture in which prefactors are learned via gradient descent
alongside the ansatz parameters, adding spectrum optimization as an
additional degree of freedom alongside coefficient learning.
We analyze the conditions under which this optimization succeeds,
showing that reliable convergence requires the target spectrum to lie
within the locally reachable neighborhood of initialization, and
propose a structured initialization strategy that ensures this
condition holds by construction.

\paragraph{Classical frequency learning.}
The problem of adapting a model's frequency content to a target
is not unique to quantum computing.
Random Fourier Features~\citep{rahimi2007random} approximate
shift-invariant kernels by projecting inputs onto randomly drawn
frequencies, with the distribution of frequencies determined by
the kernel's Fourier transform.
Learned variants~\citep{yang2015carte} optimize the frequency
distribution directly, playing a role analogous to trainable
prefactors in TF circuits.
The reachability problem studied here has no direct classical
analogue in this setting: classical frequency learning operates
on continuous distributions over frequencies rather than a small
number of discrete prefactors, and gradient suppression from
spectral gap structure does not arise in the same form.
The quantum setting's distinctive constraint is precisely the
small number of encoding gates --- and hence prefactors ---
that TF circuits must work with, making initialization geometry
critical in a way that does not apply to over-parameterized
classical models.

\paragraph{Trainability, barren plateaus, and initialization.}
The trainability of VQCs has been extensively studied through the lens
of barren plateaus, where gradient variances decay exponentially with
circuit width~\citep{McClean_2018}.
\citet{larocca2024reviewbarrenplateausvariational} provide a
comprehensive Lie-algebraic treatment of this phenomenon.
Frequency reachability is a distinct trainability limitation: it
affects the encoding prefactors specifically and does not reduce to
variance scaling of the ansatz gradients, remaining present even in
circuits where barren plateaus are absent.
\citet{Zhang_2025} proposed adaptive Gaussian initialization for
ansatz parameters to mitigate barren plateaus; the frequency
reachability problem and its resolution via ternary initialization
are orthogonal to ansatz initialization and the two strategies are
fully compatible.

\section{Background}
\label{sec:background}

We briefly establish notation and recall the results on which the
paper builds.
Throughout, $x \in \mathbb{R}$ denotes a scalar input,
$\theta \in \mathbb{R}^p$ the ansatz parameters,
$\alpha \in \mathbb{R}^k$ the encoding prefactors for univariate
inputs, and $M$ a Hermitian observable with operator norm
$\|M\|_{\mathrm{op}} = \sup_{\|v\|=1}\|Mv\|$.
For multivariate inputs $x \in \mathbb{R}^d$, prefactors extend
naturally to $\alpha \in \mathbb{R}^{k \times d}$; the theoretical
results in this paper are stated for the univariate case, and the
multivariate experiments in Section~\ref{sec:california} apply the
same initialization strategy per feature.
We model the target as a truncated Fourier series
$h(x) = \sum_{\omega \in \Omega_h} h_\omega\,e^{i\omega x}$
with finite spectral support $\Omega_h \subset \mathbb{R}$ and
coefficients $h_\omega \in \mathbb{C}$; real-world targets are
treated as approximately satisfying this assumption within the
frequency range of the model.
Training minimizes the mean-squared loss
$\mathcal{L}(\theta,\alpha)
= \frac{1}{2\pi}\int_{-\pi}^{\pi}(f(x,\theta,\alpha)-h(x))^2
\,\mathrm{d}x$,
proportional to the squared $L^2[-\pi,\pi]$ norm of the residual.
We write $K(\alpha) := |\Omega(\alpha)|$ for the number of accessible
frequencies at prefactor values $\alpha$.

\subsection{Variational Quantum Circuits as Fourier Series}

\begin{definition}[Unary Fixed-Frequency VQC]
An $L$-layer unary fixed-frequency VQC with angle encoding consists of
feature maps $S(x) = e^{ixH}$ where $H = \tfrac{1}{2}\sigma$ for a
Pauli operator $\sigma$, implementing rotation gates
$R(x) = e^{-i\frac{x}{2}\sigma}$ with unit prefactors, parametrized
ansatz layers $W_i(\theta_i)$, and an observable $M$.
The circuit interleaves feature maps and ansatz layers:
\begin{equation}
  U(x,\theta) = W_L(\theta_L)\,S(x)\cdots S(x)\,W_0(\theta_0),
\end{equation}
with output
$f_\theta(x) = \langle 0|U^\dagger(x,\theta)\,M\,U(x,\theta)|0\rangle$.
\end{definition}

\begin{theorem}[Fourier Representation, \citealt{Schuld_2021}]
\label{thm:fourier}
For a VQC with angle encoding, the output $f_\theta(x)$ is a truncated
Fourier series:
\begin{equation}
  f_\theta(x) = \sum_{\omega\in\Omega} c_\omega(\theta)\,e^{i\omega x},
\end{equation}
where $\Omega$ is determined by the feature map structure and
$|\Omega| = 2L+1$ for $L$-layer unary circuits.
\end{theorem}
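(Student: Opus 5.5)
The plan is to diagonalize the encoding generator and track how the input $x$ enters as phases. Write $H=\tfrac12\sigma = V D V^\dagger$ with $D=\mathrm{diag}(\lambda_1,\dots,\lambda_d)$. Since $\sigma$ is a Pauli operator, each $\lambda_j\in\{-\tfrac12,+\tfrac12\}$. Each feature map then becomes $S(x)=V e^{ixD}V^\dagger$, and the unitaries $V,V^\dagger$ are absorbed into the neighbouring ansatz layers. This gives $W_i' = V^\dagger W_i V$ and similar, which changes neither the $x$-dependence nor the parameter dependence.

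Next, expand $U(x,\theta)|0\rangle$ in the eigenbasis by inserting resolutions of the identity between layers. The $j$-th component is
\[
  [U(x,\theta)|0\rangle]_j=\sum_{\mathbf{k}} e^{ix(\lambda_{k_1}+\cdots+\lambda_{k_L})}\,(W_L')_{j k_L}\cdots (W_1')_{k_2k_1}(W_0')_{k_1 0},
\]
where the sum runs over multi-indices $\mathbf{k}=(k_1,\dots,k_L)$. Substituting this and its conjugate into $\langle 0|U^\dagger M U|0\rangle$ gives a double sum over $\mathbf{k},\mathbf{j}$. Each term carries a phase $e^{ix(\Lambda_{\mathbf{k}}-\Lambda_{\mathbf{j}})}$ with $\Lambda_{\mathbf{k}}=\sum_\ell \lambda_{k_\ell}$, and a coefficient that depends only on $\theta$ and $M$.

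Grouping terms by the frequency $\omega=\Lambda_{\mathbf{k}}-\Lambda_{\mathbf{j}}$ gives $f_\theta(x)=\sum_{\omega\in\Omega}c_\omega(\theta)e^{i\omega x}$. Here $\Omega$ is the set of such differences, so it depends only on the spectrum of the encoding generators, that is, on the feature-map structure. Because $f_\theta$ is real and $\Omega=-\Omega$, we also get $c_{-\omega}=\overline{c_\omega}$. The sum is finite, so the series is truncated.

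The remaining step is counting, and this is where I expect the only real care to be needed. With $\lambda\in\{\pm\tfrac12\}$, each $\Lambda_{\mathbf{k}}$ lies in $\{-\tfrac L2,-\tfrac L2+1,\dots,\tfrac L2\}$ and every value in this set is attained. The differences are therefore exactly the integers in $[-L,L]$: any $\omega$ in this range is realized by choosing $\Lambda_{\mathbf{k}}$ and $\Lambda_{\mathbf{j}}$ appropriately. Hence $|\Omega|=2L+1$.

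The subtle point is that the statement counts the accessible frequency set $\Omega$, not frequencies with nonzero coefficients. Some $c_\omega(\theta)$ may vanish for particular $\theta$, and the count $|\Omega|=2L+1$ is a statement about the spectrum, not about the coefficients. With that clarified, the argument is purely combinatorial.
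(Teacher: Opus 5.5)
Your proof is correct and follows the standard argument of Schuld et al.\ (2021), which the paper cites rather than reproving: diagonalize the encoding generator, absorb the eigenbasis change into the neighbouring trainable blocks, read off the frequencies as differences $\Lambda_{\mathbf{k}}-\Lambda_{\mathbf{j}}$ of summed eigenvalues, and count $\{-L,\dots,L\}$ for eigenvalues $\pm\tfrac12$. The same expansion goes through unchanged if each layer has its own generator and eigenbasis, as in the ternary or trainable-frequency circuits, so it also covers the general ``angle encoding'' wording of the statement.
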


\begin{definition}[Ternary Fixed-Frequency VQC]
A ternary fixed-frequency VQC uses exponentially-spaced prefactors:
the $i$-th feature map takes the form
$S_i(x) = e^{i\cdot 3^i x H}$ for $i \in \{0,\ldots,L-1\}$.
For $k$ encoding gates, this generates frequency spectrum
$\Omega_k = \{n\in\mathbb{Z}:|n|\leq(3^k-1)/2\}$,
covering all integers in $[{-}(3^k{-}1)/2,\,(3^k{-}1)/2]$
\citep{Shin_2023, Peters_2023}.
\end{definition}

\begin{definition}[Trainable-Frequency VQC]
A trainable-frequency (TF) VQC extends the fixed-frequency
architecture by parameterizing the feature map prefactors:
the $j$-th encoding gate takes the form
$S_j(x,\alpha_j) = e^{i\alpha_j x H_j}$,
where $\alpha_j \in \mathbb{R}$ is trainable alongside the ansatz
parameters $\theta$.
The accessible spectrum $\Omega(\alpha)$ therefore depends on the
current prefactor values and evolves during training.
\end{definition}

\subsection{Universal Approximation}
\label{sec:scaling}

We call ansatz layers \emph{universal} if they are capable of
realizing any unitary in $\mathrm{SU}(2^n)$, providing sufficient
expressivity to independently control all accessible Fourier
coefficients~\citep{Schuld_2021}.

\begin{theorem}[Universal Approximation for TF VQCs,
\citealt{yu2022power, PhysRevA.104.012405}]
For a unary TF circuit with universal ansatz layers, for any continuous
$f:[0,1]^d\to[-1,1]$ and $\varepsilon>0$, there exist depth $L$ and
parameters $\theta,\alpha$ achieving
$|f_{\theta,\alpha}(x)-f(x)|<\varepsilon$ for all $x$.
\end{theorem}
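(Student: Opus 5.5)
The universal approximation statement for unary TF circuits with universal ansatz layers is proved by combining two classical ingredients: density of trigonometric polynomials, and the fact that a sufficiently deep unary circuit with universal ansatz can realize a given trigonometric polynomial. The trainable prefactors are used only to rescale the input domain. For $d=1$ the plan runs as follows. First, extend $f$ from $[0,1]$ to a continuous $2\pi$-periodic function $\tilde f$ on $\mathbb{R}$. One way is to map $[0,1]$ into $[-\pi/2,\pi/2]$ via $x\mapsto \pi(x-\tfrac12)$ and then extend by reflection, $\tilde f(\pi-t)=\tilde f(t)$; this keeps the extension continuous and keeps its values in $[-1,1]$. The Stone--Weierstrass theorem (or Fej\'er's theorem) then gives a real trigonometric polynomial $p(t)=\sum_{|n|\le N}c_n e^{int}$ with $\|\tilde f-p\|_\infty<\varepsilon/2$. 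After a harmless shrinkage $p\mapsto(1-\eta)p$, we may assume $\|p\|_\infty<1$.

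The second step realizes $p(\alpha x + \beta)$ exactly, or within $\varepsilon/2$, by a circuit. Set all prefactors $\alpha_j=\pi$ so that the encoded angle is $\pi x$. The constant shift $\beta$ is absorbed into the adjacent ansatz layers, since $e^{i\beta H}$ is itself a unitary the universal layers can implement. Theorem~\ref{thm:fourier} then shows that an $L$-layer unary circuit has spectrum $\{-L,\dots,L\}$, so we take $L\ge N$. It remains to show that the coefficients $c_n$ are attainable. Here I would invoke the single-qubit data re-uploading construction of \citet{PhysRevA.104.012405}, which is cited in the statement. The underlying fact is quantum signal processing: for any real trigonometric polynomial of degree $\le N$ bounded by $1$ in sup norm, there exist $N+1$ single-qubit rotations, interleaved with $R(x)$ gates, whose $\langle Z\rangle$ expectation equals that polynomial. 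A universal ansatz on $n$ qubits contains these single-qubit rotations as a special case, so this settles the univariate case.

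For $d>1$ the same approach uses multivariate trigonometric polynomials, which are dense on the torus by Stone--Weierstrass. They can be realized either by encoding the coordinates sequentially, with each $x_i$ uploaded $N$ times and prefactors playing the role of frequency vectors, or, more simply, by using the trainable prefactors as in \citet{yu2022power}. In the latter route one writes $p$ as a finite sum of ridge functions $\sum_m g_m(w_m\cdot x)$, which is possible for trigonometric polynomials since each monomial $e^{in\cdot x}$ is already a ridge function. The sum is then implemented through linear combinations of observables on ancilla registers.

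I expect the main obstacle to be the exact realizability of arbitrary coefficient vectors, in particular meeting the boundedness condition of quantum signal processing, namely the existence of a complementary polynomial. Rather than reprove this, I would cite it from \citet{PhysRevA.104.012405} and \citet{yu2022power}, and make sure the shrinkage step secures the strict bound $\|p\|_\infty<1$ that these results require.
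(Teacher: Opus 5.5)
The paper does not prove this theorem. It imports it from \citet{yu2022power} and \citet{PhysRevA.104.012405}, so your proposal has to stand on its own.

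For $d=1$ it does. The reflected $2\pi$-periodic extension, Fej\'er approximation, and exact realization of a real trigonometric polynomial $p$ with $\|p\|_\infty\le1$ as $\langle Z\rangle$ of an interleaved single-qubit sequence is essentially the QSP argument of \citet{yu2022power}. The obstacle you anticipate is not one. Fej\'er--Riesz gives $(1+p)/2=|a|^2$ and $(1-p)/2=|b|^2$ with $a,b$ polynomials of degree $N$ in $e^{it}$; this is already the complementary pair, and layer stripping realizes it, so the strict bound is not needed. You should, however, state that the fixed observable has eigenvalues $\pm1$, e.g.\ a Pauli string that the last universal layer rotates to $Z$. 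For a general $M$ the outputs need not even cover $[-1,1]$.

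The genuine gap is $d>1$, which the statement explicitly includes, and neither of your multivariate routes is justified.

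\emph{Sequential uploading.} This is only asserted, and the univariate tools do not carry over. With one qubit, $\langle Z\rangle=2|P|^2-1$ requires a single-square factorization of $(1+p)/2$, and such factorizations fail in general in two or more variables. With more qubits you would need a multivariate factorization or approximation theorem, which you do not supply.

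\emph{Ridge route.} This fails structurally. The observable is fixed; it is not among the free parameters $L,\theta,\alpha$. So mixing ancilla branches, or an LCU followed by a Hadamard test, produces $s(x)=\sum_m\lambda_m g_m(w_m\cdot x)$ with $\sum_m|\lambda_m|\le1$ and $\|g_m\|_\infty\le1$. From a decomposition $p=\sum_m G_m(w_m\cdot x)$ you therefore realize $p/\Lambda$ with $\Lambda=\sum_m\|G_m\|_\infty$, not $p$, and $\Lambda$ is typically much larger than $\|p\|_\infty$.

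No cleverer decomposition helps. Take $\chi(x)=\sin(2\pi x_1)\sin(2\pi x_2)$ on $[0,1]^2$. In every direction $w$, a positive-measure family of lines $\{w\cdot x=t\}$ crosses a sign change of $\chi$, so its Radon projections obey $\|\mathcal{R}_w\chi\|_{L^1}<\|\chi\|_{L^1}$. By continuity and compactness, $\kappa:=\sup_{|w|=1}\|\mathcal{R}_w\chi\|_{L^1}<\|\chi\|_{L^1}$. Hence $\int s\chi\le\kappa$ for every such $s$. A continuous $f$ with $|f|\le1$ approximating $\operatorname{sign}\chi$ has $\int f\chi$ arbitrarily close to $\|\chi\|_{L^1}$. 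Therefore $\|f-s\|_\infty\ge(\int f\chi-\kappa)/\|\chi\|_{L^1}>0$ for all such $s$.

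\emph{The missing idea.} Add the ridge components inside the rotation angle rather than in the output. Write $f=\cos\Theta$ with $\Theta=\arccos f$ continuous, and approximate $\Theta$ uniformly by $\Theta'=c_0+\sum_m r_m\cos(n_m\cdot x+\phi_m)$ with unrestricted $r_m$. Applying $\exp(-\tfrac{i}{2}\Theta'Y)$ to $|0\rangle$ gives $\langle Z\rangle=\cos\Theta'$. Trotterize this rotation using $\cos(u)\,Y=\tfrac12\bigl(e^{-iuZ/2}Ye^{iuZ/2}+e^{iuZ/2}Ye^{-iuZ/2}\bigr)$. This needs only constant $Y$-rotations interleaved with encodings $e^{\pm i(n_m\cdot x)Z/2}$, i.e.\ prefactors $\pm n_{m,j}$ on each coordinate $x_j$, and the Trotter error is uniform in $x$. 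Singular-value amplification of an LCU block encoding would also remove the factor $\Lambda$. Neither ingredient is in your sketch.
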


The encoding gate complexity of ternary TF circuits relative to fixed
unary encoding is established formally in
Corollary~\ref{cor:gate-complexity}.

\subsection{Spectral Redundancy and Training Dynamics}
\label{sec:redundancy}

\citet{Duffy_2025} proved that gradient signals for ansatz parameters
in PQCs are governed by the \emph{redundancy} $R(\omega)$ of each
frequency component, defined as the number of distinct variational
terms in the circuit's Fourier decomposition contributing to that
frequency.
For any parameter $\theta$ and frequency $\omega$, the gradient of
the loss satisfies
\begin{equation}
  |\partial_\theta \mathcal{L}(\omega)|
  \leq 4R(\omega)\|M\|_{\mathrm{tr}}\,|c^\Delta_\omega|,
  \label{eq:duffy-bound}
\end{equation}
where $\|M\|_{\mathrm{tr}}$ denotes the trace norm,
$c^\Delta_\omega = c_\omega(\theta) - h_\omega$ is the coefficient
residual at frequency $\omega$, and the bounds in
Section~\ref{sec:theory} are stated in terms of
$\|M\|_{\mathrm{op}}$ following a different derivation path.
Frequencies with higher redundancy can therefore induce larger
gradients and converge faster.
Under unary encoding, $R(\omega)$ decays with frequency magnitude,
suppressing gradient signals for high-frequency components.
\citet{holzer2024spectralinvariancemaximalityproperties} showed that
ternary encoding achieves near-uniform redundancy at the initial grid
values $\alpha^{(0)} = \{1, 3, 9, \ldots\}$, with $R(\omega)
\in \{1, 2\}$ throughout the accessible spectrum, avoiding the
decay that penalizes high frequencies under unary encoding.
Once prefactors deviate from consecutive powers of three during
training, this uniformity is no longer guaranteed; it is a property
of the initialization rather than of the trained circuit.
Our analysis in Section~\ref{sec:theory} applies related
Fourier-analytic tools to the encoding prefactors $\alpha$ rather
than the ansatz parameters $\theta$, where the gradient takes a
distinct form arising from the explicit data-dependence of the
encoding gates and is characterized by the commutator identity of
Lemma~\ref{lem:commutator}.

\section{Frequency Reachability in Trainable-Frequency Models}
\label{sec:reachability}

The theoretical appeal of TF circuits rests on the assumption that
gradient-based optimization can drive frequency prefactors
$\alpha$ to arbitrary target values during training.
We examine this assumption systematically, first through experiment
and then through formal analysis.

\subsection{Empirical Characterization}
\label{sec:emp-reachability}

\paragraph{A stress test of long-range frequency trainability.}
\citet{Jaderberg_2024} demonstrate successful fitting of a target
function with frequency spectrum $\Omega_1 = \{1, 1.2, 3\}$ using
three unary trainable prefactors initialized to $1.0$.
On inspection, the target frequencies lie within $0.2$ units of
the initialization --- a displacement consistent with the locally
reachable neighborhood characterized in
Corollary~\ref{cor:displacement} --- so this experiment does not
test the circuit's ability to learn frequencies far from
initialization.
To isolate this capability, we shift the target spectrum by $10$
units: $\Omega_2 = \{11, 11.2, 13\}$.
The circuit architecture, optimizer, and all hyperparameters follow
the same general approach as \citet{Jaderberg_2024}; only the
target frequency range changes.
As shown in Figure~\ref{fig:unary_lines}(b), optimization fails
completely under this shift.

\begin{figure*}[htbp]
    \centering
    \begin{subfigure}[b]{0.32\textwidth}
        \includegraphics[width=\linewidth]{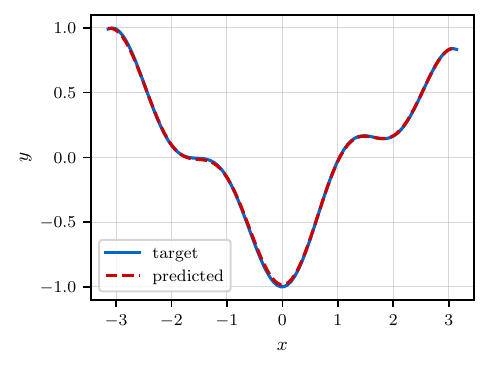}
        \caption{$\Omega_1 = \{1, 1.2, 3\}$ with unary prefactors.}
        \label{fig:unary_jad}
    \end{subfigure}
    \hfill
    \begin{subfigure}[b]{0.32\textwidth}
        \includegraphics[width=\linewidth]{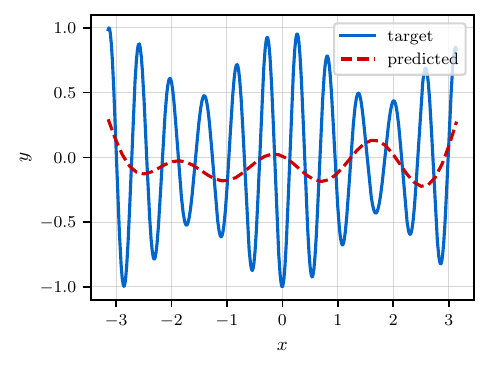}
        \caption{$\Omega_2 = \{11, 11.2, 13\}$ with unary prefactors.}
        \label{fig:unary_jad+10}
    \end{subfigure}
    \hfill
    \begin{subfigure}[b]{0.32\textwidth}
        \includegraphics[width=\linewidth]{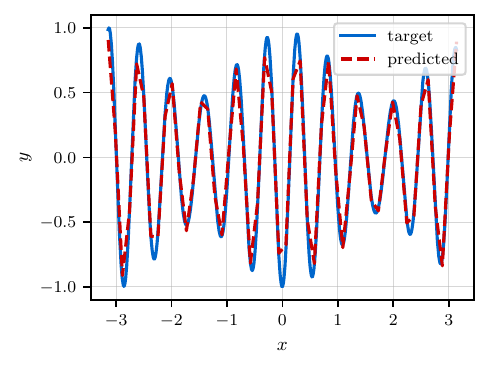}
        \caption{$\Omega_2 = \{11, 11.2, 13\}$ with ternary initialization.}
        \label{fig:ternary_jad+10}
    \end{subfigure}
    \caption{Trained predictions against target for the original
    \citet{Jaderberg_2024} frequency spectrum with unary prefactors
    (a), the frequency-shifted variant with unary prefactors (b),
    and the same shifted spectrum with ternary grid initialization
    (c). Ternary initialization recovers accurate fits where unary
    fails entirely.}
    \label{fig:unary_lines}
\end{figure*}

\paragraph{Setup.}
We use 3-qubit circuits with 3 feature maps and SpecialUnitary ansatz
blocks (63 parameters each), following the general approach of
\citet{Jaderberg_2024}.
Ten target functions are generated per frequency configuration
(Fourier coefficients $a_i, b_i, c_0 \sim \mathrm{Uniform}([0,1))$),
each trained with 10 random weight initializations (100 runs per
configuration).
Prefactors are initialized to $\alpha^{(0)} = \{1.01, 1.02, 1.03\}$
to ensure distinct gradient signals from the first step and avoid
lock-step evolution; Appendix~\ref{app:supporting-figures} confirms
that the failure mode is insensitive to this choice, with identical
initialization $\{1.0, 1.0, 1.0\}$ producing the same failure
distribution.
All experiments use Adam with learning rate $\eta = 0.001$ for
$5{,}000$ steps.

\paragraph{Prefactor displacement.}
Figure~\ref{fig:grad_prefactor_analysis}(a) quantifies prefactor
mobility across learning rates.
At standard rates $\eta \in \{0.001, 0.01\}$, both mean and maximum
displacement remain below $1$ unit after $5{,}000$ steps.
Reaching $\Omega_2$ requires shifting the accessible spectrum by
$\approx 10$ units in frequency space---achievable, for example, by
moving a single prefactor from $\approx 1$ to $\approx 11$, or all
three from $\approx 1$ to $\approx 4.3$---so the observed sub-unit
displacement falls well short under any distribution across the
prefactors.
Even at the aggressive rate $\eta = 0.1$, maximum displacement
reaches only $\approx 7$ units, with most prefactors still failing
to converge.
The individual prefactor trajectories underlying these distributions
are shown in Appendix~\ref{app:supporting-figures}
(Figure~\ref{fig:prefactor_evolution}).

\paragraph{Gradient locality.}
Figure~\ref{fig:grad_prefactor_analysis}(b) reveals the underlying
cause.
Prefactor gradient magnitudes $|\partial\mathcal{L}/\partial\alpha_i|$,
measured at initialization across a sweep of initial prefactor values,
are elevated near the target frequency range
$\Omega_2 = \{11, 11.2, 13\}$, consistent with the inverse
spectral-gap scaling predicted by Proposition~\ref{prop:gap}.
This magnitude analysis does not directly measure gradient direction;
the theoretical decomposition of Section~\ref{sec:theory} establishes
that the target-driven component $T_{\mathrm{tgt}}$ is directionally
informative only when $\delta(\alpha, \Omega_h)$ is small.
A prefactor initialized far from the target spectrum therefore
receives a weaker and less directionally informative gradient signal,
and training stalls before meaningful movement occurs.

\begin{figure}[htbp]
    \centering
    \begin{subfigure}[b]{0.48\linewidth}
        \includegraphics[width=\linewidth]
            {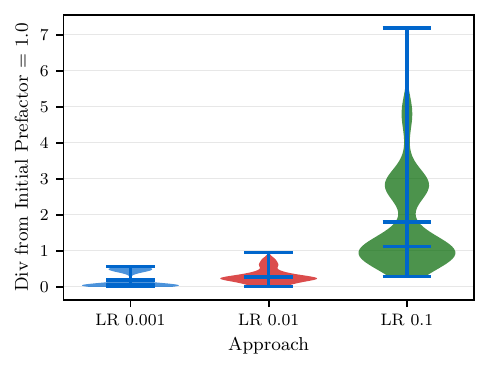}
        \caption{Prefactor displacement after 5,000 steps.}
        \label{fig:alpha_div_distAlpha}
    \end{subfigure}
    \hfill
    \begin{subfigure}[b]{0.48\linewidth}
        \includegraphics[width=\linewidth]
            {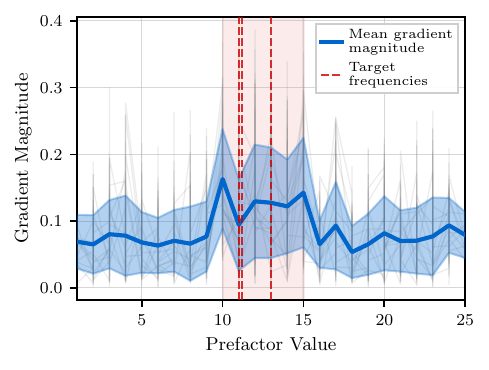}
        \caption{Gradient magnitude vs.\ initial prefactor value.}
        \label{fig:grads_prefactors_jad+10_distAlpha}
    \end{subfigure}
    \caption{Prefactor displacement and gradient analysis across
    100 runs with $\alpha^{(0)} = \{1.01, 1.02, 1.03\}$.
    Displacement remains far below what is required to reach
    $\Omega_2$ under all learning rates tested~(a).
    Gradient magnitude is elevated near the target frequency range,
    consistent with the inverse spectral-gap scaling predicted by
    Proposition~\ref{prop:gap}~(b).}
    \label{fig:grad_prefactor_analysis}
\end{figure}

\subsection{Formal Characterization}
\label{sec:theory}

We now prove that the gradient locality observed experimentally is
a structural consequence of the circuit's Fourier architecture,
holding for all ansatz parameter values $\theta$ and all learning
rates.

\paragraph{Circuit partition.}
For each encoding gate $j$, partition the circuit as
\begin{equation}
  U(x,\theta,\alpha)
  = U_{\mathrm{after},j}(\theta_{\mathrm{after}},\alpha_{\setminus j})
  \cdot S_j(x,\alpha_j)
  \cdot U_{\mathrm{before},j}(\theta_{\mathrm{before}},\alpha_{\setminus j}),
\end{equation}
where $U_{\mathrm{before},j}$ and $U_{\mathrm{after},j}$ collect all
gates before and after gate $j$ respectively, and
$\alpha_{\setminus j}$ denotes all prefactors except $\alpha_j$.
Crucially, neither subcircuit depends on $\alpha_j$ itself.
Define the pre-encoding state
$|\psi_j(x,\theta)\rangle := U_{\mathrm{before},j}|0\rangle$,
the pulled-back observable
$\widetilde{M}_j := U_{\mathrm{after},j}^\dagger M U_{\mathrm{after},j}$,
and the effective observable
$M_j^{\mathrm{eff}} := S_j^\dagger \widetilde{M}_j S_j$.

The two lemmas below characterize the structure of the prefactor
gradient; full proofs are given in Appendix~\ref{app:gradient-theory}.

\begin{assumption}[Symmetric input domain]
\label{ass:symmetric-domain}
Throughout this section, input data are assumed to lie in
$x \in [-\pi, \pi]$, or equivalently to have been normalized
to a symmetric interval around zero prior to encoding.
This ensures $I(0) = \frac{1}{2\pi}\int_{-\pi}^{\pi} x\,\mathrm{d}x = 0$,
which is used in the proof of Proposition~\ref{prop:gap}.
This assumption is without loss of generality: any bounded input
domain $[a, b]$ can be mapped to $[-\pi, \pi]$ by the affine
transformation $x \mapsto \frac{2\pi}{b-a}\!\left(x - \frac{a+b}{2}\right)$,
which is absorbed into the encoding prefactors $\alpha_j$.
\end{assumption}

\begin{lemma}[Commutator Identity]
\label{lem:commutator}
The gradient of the circuit output with respect to $\alpha_j$ satisfies
\begin{equation}
  \frac{\partial f}{\partial\alpha_j}(x)
  = ix\,\bigl\langle\psi_j\bigr|
    \bigl[M_j^{\mathrm{eff}},\,H_j\bigr]
    \bigl|\psi_j\bigr\rangle
  \label{eq:commutator}
\end{equation}
for all $x$, $\theta$, and $\alpha$.
\end{lemma}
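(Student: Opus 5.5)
The plan is a direct differentiation of the circuit output with the partition fixed above. First I rewrite the output in a form where the $\alpha_j$-dependence is isolated in a single factor. Substituting the partition into $f = \langle 0|U^\dagger M U|0\rangle$ and using the definitions of $|\psi_j\rangle$ and $\widetilde{M}_j$ gives
\begin{equation*}
  f(x,\theta,\alpha) = \bigl\langle\psi_j\bigr| S_j^\dagger(x,\alpha_j)\,\widetilde{M}_j\,S_j(x,\alpha_j)\bigl|\psi_j\bigr\rangle
  = \bigl\langle\psi_j\bigr| M_j^{\mathrm{eff}}\bigl|\psi_j\bigr\rangle .
\end{equation*}
The structural point is that $|\psi_j\rangle$ and $\widetilde{M}_j$ are built only from $U_{\mathrm{before},j}$ and $U_{\mathrm{after},j}$. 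As noted after the partition, neither of these depends on $\alpha_j$. This uses the TF definition, where each encoding gate carries its own prefactor $\alpha_j$, so $\alpha_j$ appears nowhere else in the circuit. Consequently, only the two factors $S_j$ and $S_j^\dagger$ need to be differentiated.

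Next I compute the derivatives of the encoding gate. Since $S_j = e^{i\alpha_j x H_j}$ is the exponential of $\alpha_j$ times a fixed operator $ixH_j$, we have $\partial_{\alpha_j} S_j = ixH_j S_j = S_j\, ixH_j$, because $H_j$ commutes with its own exponential. Taking adjoints and using $H_j^\dagger = H_j$ gives $\partial_{\alpha_j} S_j^\dagger = -ix H_j S_j^\dagger = -ix\,S_j^\dagger H_j$.

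I then apply the product rule to $S_j^\dagger \widetilde{M}_j S_j$. This yields
\begin{equation*}
  \partial_{\alpha_j} M_j^{\mathrm{eff}}
  = -ix\,H_j S_j^\dagger \widetilde{M}_j S_j + ix\,S_j^\dagger \widetilde{M}_j S_j H_j
  = ix\bigl(M_j^{\mathrm{eff}} H_j - H_j M_j^{\mathrm{eff}}\bigr)
  = ix\,\bigl[M_j^{\mathrm{eff}},H_j\bigr].
\end{equation*}
Sandwiching this between $\langle\psi_j|$ and $|\psi_j\rangle$, which are $\alpha_j$-independent, gives \eqref{eq:commutator} for all $x$, $\theta$ and $\alpha$. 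As a sanity check, $i[M_j^{\mathrm{eff}},H_j]$ is Hermitian, so the right-hand side is real, as the derivative of a real expectation value must be.

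I expect no real analytic obstacle. The one step that needs care is the bookkeeping claim that $\alpha_j$ enters only through $S_j$. If a prefactor were shared across several encoding gates, the same computation would give a sum of such commutator terms, one per gate. Under the TF definition, however, each gate has its own independent $\alpha_j$, so the single-term identity holds.
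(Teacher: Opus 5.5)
Your proposal is correct and follows essentially the same route as the paper: isolate the $\alpha_j$-dependence in $S_j^\dagger \widetilde{M}_j S_j$, differentiate the two encoding factors by the product rule, and use $[H_j,S_j]=0$ to rewrite the result as $ix[M_j^{\mathrm{eff}},H_j]$. The only difference is cosmetic: you move $H_j$ past $S_j$ already in the derivative formulas, whereas the paper first forms $S_j^\dagger[\widetilde{M}_j,H_j]S_j$ and then conjugates $H_j$ through $S_j$.
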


The explicit factor of $x$ in~\eqref{eq:commutator} distinguishes
encoding parameter gradients from ansatz parameter gradients, where
differentiation produces no such factor.
As a non-periodic function on $[-\pi,\pi]$, $x$ couples any Fourier
mode $e^{i\nu x}$ to all other frequencies with amplitude $\sim
1/|\omega - \nu|$, so when the accessible spectrum $\Omega(\alpha)$
is far from the target support $\Omega_h$, the coupling is weak.

\begin{lemma}[Spectral Structure]
\label{lem:spectral-structure}
Define $G_j(x) := \langle\psi_j|[M_j^{\mathrm{eff}},H_j]|\psi_j\rangle$,
so that $\partial_{\alpha_j}f = ix\,G_j$ by Lemma~\ref{lem:commutator}.
Then for all $x$, $\theta$, and $\alpha$:
\emph{(i)} $G_j$ is a trigonometric polynomial with Fourier support
$\Omega_j \subseteq \Omega(\alpha)$;
\emph{(ii)} $|G_j(x)| \leq 2\|M\|_{\mathrm{op}}\|H_j\|_{\mathrm{op}}$
uniformly in $x$ and $\theta$;
\emph{(iii)} each Fourier coefficient $d_\nu(\theta)$ of $G_j$ satisfies
$|d_\nu(\theta)| \leq 2\|M\|_{\mathrm{op}}\|H_j\|_{\mathrm{op}}$
uniformly in $\theta$.
\end{lemma}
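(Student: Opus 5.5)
Let me write $S_j = S_j(x,\alpha_j) = e^{i\alpha_j x H_j}$ and expand $G_j$ by tracking its $x$-dependence through each factor. Write
\[
G_j(x)=\langle\psi_j|S_j^\dagger[\widetilde M_j,H_j]S_j|\psi_j\rangle .
\]
This form uses that $H_j$ commutes with $S_j$, so that $[M_j^{\mathrm{eff}},H_j]=S_j^\dagger[\widetilde M_j,H_j]S_j$.

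\textbf{Part (i).} Diagonalize $H_j=\sum_a\lambda_a P_a$. Then $S_j=\sum_a e^{i\alpha_j x\lambda_a}P_a$. The state $|\psi_j\rangle$ and the observable $\widetilde M_j$ are produced by the sub-circuits $U_{\mathrm{before},j}$ and $U_{\mathrm{after},j}$. These contain the other encoding gates $S_l(x,\alpha_l)$, $l\neq j$, each of which expands the same way in terms of the eigenvalue projectors of $H_l$.

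Substituting all these expansions, $G_j$ becomes a finite sum of terms of the form $e^{ix\sum_l\alpha_l(\lambda^{(l)}_{a}-\lambda^{(l)}_{b})}$ times $x$-independent coefficients. Those coefficients depend only on $\theta$, $\alpha$ and the projector matrix elements.

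The frequencies that appear are therefore sums of eigenvalue differences weighted by the prefactors. This is exactly the spectrum $\Omega(\alpha)$ obtained by the same expansion of $f$ itself (Theorem~\ref{thm:fourier} applied with prefactors $\alpha_l$). The key point is that the insertion of the fixed $x$-independent operator $H_j$ adds no new frequencies. It sits at the same position as $M$'s pull-back and does not carry a factor $e^{ix\cdot}$. Hence $\Omega_j\subseteq\Omega(\alpha)$.

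The main subtlety I expect is bookkeeping: when the generators differ across gates, one should define $\Omega(\alpha)$ as the set of all such weighted difference sums. I would state the inclusion relative to that definition.

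\textbf{Part (ii).} Use $\|\psi_j\|=1$ and unitarity of $U_{\mathrm{after},j}$ and $S_j$. These give $\|M_j^{\mathrm{eff}}\|_{\mathrm{op}}=\|M\|_{\mathrm{op}}$. Then
\[
|G_j(x)|\le\|[M_j^{\mathrm{eff}},H_j]\|_{\mathrm{op}}\le 2\|M\|_{\mathrm{op}}\|H_j\|_{\mathrm{op}},
\]
by the triangle inequality and submultiplicativity. This holds for every $x$ and $\theta$.

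\textbf{Part (iii).} The Fourier coefficients follow from (ii). Since $G_j$ is a trigonometric polynomial with (generally non-integer) frequencies, I would recover each coefficient by the mean-value formula
\[
d_\nu=\lim_{T\to\infty}\frac{1}{2T}\int_{-T}^{T}G_j(x)e^{-i\nu x}\,dx .
\]
This formula is valid for almost-periodic sums with finitely many distinct frequencies, even when they are not integer-spaced, so it does not rely on orthogonality on $[-\pi,\pi]$.

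It yields $|d_\nu|\le\sup_x|G_j(x)|\le 2\|M\|_{\mathrm{op}}\|H_j\|_{\mathrm{op}}$, uniformly in $\theta$.

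The only care needed is that the formula must be applied to $G_j$ as a function on all of $\mathbb R$. This requires that $\theta$ and $\alpha$ are held fixed, and that the bound in (ii) holds for all real $x$ and not just on $[-\pi,\pi]$. The bound in (ii) indeed does not use the domain.
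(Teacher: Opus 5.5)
Your proof is correct. Parts (i) and (ii) follow the paper's route: you track the $x$-dependence through the encoding gates, then use unitary invariance of $\|\cdot\|_{\mathrm{op}}$ together with $\|[A,B]\|_{\mathrm{op}}\le 2\|A\|_{\mathrm{op}}\|B\|_{\mathrm{op}}$. Your joint eigenprojector expansion is a more explicit version of the paper's claim that the frequency inclusion is ``inherited'' by the expectation value. It correctly handles the frequencies contributed by $|\psi_j\rangle$ and by $\widetilde M_j$ together.

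Part (iii) genuinely differs. The paper sets $d_\nu=\frac{1}{2\pi}\int_{-\pi}^{\pi}G_j(x)e^{-i\nu x}\,dx$ and bounds this by $\sup_x|G_j(x)|$. That identity requires all distinct frequencies in $\Omega_j$ to differ by integers. Otherwise the integral equals $\sum_{\nu'}d_{\nu'}\sin(\pi(\nu'-\nu))/(\pi(\nu'-\nu))$, with the $\nu'=\nu$ term read as $d_\nu$, and so mixes coefficients. With trainable prefactors such as $\alpha=(1.01,1.02,1.03)$, or after any training step, non-integer spacings are the generic case. The paper's argument therefore only covers integer-spaced spectra, such as the ternary grid at initialization.

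Your Bohr mean-value formula over $\mathbb{R}$ extracts $d_\nu$ for arbitrary distinct real frequencies, so it establishes (iii) for all $\alpha$, as the lemma states. The only cost is the point you already make: $G_j$ is defined for every real $x$, and the bound in (ii) never used $x\in[-\pi,\pi]$.
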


Part~(i) establishes that the probe function $\partial_{\alpha_j}f
= ix\,G_j$ has spectral content confined to $\Omega(\alpha)$,
regardless of $\theta$.
Parts~(ii) and~(iii) provide $\theta$-free bounds on the magnitude
of each contribution, which feed directly into the gradient
suppression bound below.

\begin{definition}[Spectral Interaction Gap]
\label{def:gap}
The \emph{spectral interaction gap} between accessible spectrum
$\Omega(\alpha)$ and target spectrum $\Omega_h$ is
\begin{equation}
  \delta(\alpha,\Omega_h)
  := \min_{\substack{\omega\in\Omega_h\\\nu\in\Omega(\alpha)}}
     |\omega - \nu|.
\end{equation}
Since $\Omega(\alpha)$ is symmetric (if $\nu\in\Omega(\alpha)$
then $-\nu\in\Omega(\alpha)$), we have
$\min_{\nu\in\Omega(\alpha)}|\omega+\nu|
= \min_{\nu\in\Omega(\alpha)}|\omega-\nu|
= \delta(\alpha,\Omega_h)$
for any $\omega\in\Omega_h$, which is the form used in the
proof of Proposition~\ref{prop:gap}.
For symmetric $\Omega(\alpha)$ and target $\Omega_h$ lying
above the accessible range,
$\delta(\alpha,\Omega_h)
= \min_{\omega\in\Omega_h}\omega - \max\Omega(\alpha)$.
\end{definition}

\begin{proposition}[Spectral Gap Gradient Suppression]
\label{prop:gap}
Decompose the loss gradient as
$\partial_{\alpha_j}\mathcal{L} = T_{\mathrm{tgt}} + T_{\mathrm{self}}$,
where $T_{\mathrm{tgt}} = -\frac{1}{2\pi}\int h\,\partial_{\alpha_j}f\,\mathrm{d}x$
is the target-driven component and $T_{\mathrm{self}}$ the
self-interaction.
Provided $\delta(\alpha,\Omega_h)>0$:
\begin{equation}
  |T_{\mathrm{tgt}}|
  \leq \frac{4\,K(\alpha)\,\|M\|_{\mathrm{op}}\,\|H_j\|_{\mathrm{op}}}
            {\delta(\alpha,\Omega_h)}
  \sum_{\omega\in\Omega_h}|h_\omega|,
  \label{eq:gap-bound}
\end{equation}
for all $\theta$, $\alpha$, and $j$.
\end{proposition}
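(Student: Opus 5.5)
We bound $T_{\mathrm{tgt}}$ directly. Lemma~\ref{lem:commutator} gives $\partial_{\alpha_j}f = ix\,G_j(x)$, and Lemma~\ref{lem:spectral-structure}(i) gives the expansion $G_j(x)=\sum_{\nu\in\Omega_j}d_\nu(\theta)e^{i\nu x}$ with $\Omega_j\subseteq\Omega(\alpha)$. Substituting this together with $h(x)=\sum_{\omega\in\Omega_h}h_\omega e^{i\omega x}$ into the definition of $T_{\mathrm{tgt}}$ yields the finite double sum
\begin{equation*}
  T_{\mathrm{tgt}} = -\sum_{\omega\in\Omega_h}\sum_{\nu\in\Omega_j} i\,h_\omega\,d_\nu(\theta)\,I(\omega+\nu),
  \qquad I(\mu):=\frac{1}{2\pi}\int_{-\pi}^{\pi}x\,e^{i\mu x}\,\mathrm{d}x .
\end{equation*}
Using the triangle inequality and the uniform coefficient bound $|d_\nu|\le 2\|M\|_{\mathrm{op}}\|H_j\|_{\mathrm{op}}$ from Lemma~\ref{lem:spectral-structure}(iii), together with $|\Omega_j|\le K(\alpha)$, the whole statement reduces to a scalar estimate on $I(\mu)$ evaluated at $\mu=\omega+\nu$.

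The core step is to compute $I(\mu)$ in closed form. Integrating by parts for $\mu\neq0$ gives
\[
I(\mu)=\frac{1}{2\pi}\Big[\frac{x e^{i\mu x}}{i\mu}\Big]_{-\pi}^{\pi}-\frac{1}{2\pi i\mu}\int_{-\pi}^{\pi}e^{i\mu x}\,\mathrm{d}x .
\]
The boundary term equals $\frac{\pi\,(2\cos\mu\pi)}{2\pi i\mu}=\frac{\cos(\mu\pi)}{i\mu}$, and the remaining integral contributes $-\frac{\sin(\mu\pi)}{i\pi\mu^2}$. This yields
\[
|I(\mu)|\le\frac{1}{|\mu|}+\frac{1}{\pi\mu^2}.
\]
For $|\mu|\ge 1/\pi$ this is at most $2/|\mu|$. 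For small $|\mu|$ I will instead use the bound $|I(\mu)|\le\frac1{2\pi}\int|x|\,\mathrm{d}x=\pi/2$, and check that a uniform bound of the form $|I(\mu)|\le 2/|\mu|$ still holds. It does, since $\pi/2\le 2/|\mu|$ whenever $|\mu|\le 4/\pi$, so the two regimes overlap and cover every $\mu\neq0$. The case $\mu=0$ is exactly where Assumption~\ref{ass:symmetric-domain} enters ($I(0)=0$), but it is excluded because $\delta>0$.

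It remains to lower-bound $|\omega+\nu|$. By the symmetry of $\Omega(\alpha)$ noted in Definition~\ref{def:gap}, $|\omega+\nu|=|\omega-(-\nu)|\ge\delta(\alpha,\Omega_h)$ for every $\omega\in\Omega_h$ and $\nu\in\Omega(\alpha)$. Hence $|I(\omega+\nu)|\le 2/\delta$ for every term. Combining the pieces gives
\[
|T_{\mathrm{tgt}}|\le\sum_{\omega}|h_\omega|\cdot K(\alpha)\cdot 2\|M\|_{\mathrm{op}}\|H_j\|_{\mathrm{op}}\cdot\frac{2}{\delta},
\]
which is exactly~\eqref{eq:gap-bound} with the constant $4$.

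The main obstacle is keeping the constant uniform over all $\mu$. The integration-by-parts bound alone carries a $1/\mu^2$ term that dominates when $\delta$ is small. I will handle this by taking the minimum of the two estimates above, and verifying numerically that $\min(\pi/2,\,1/|\mu|+1/(\pi\mu^2))\le 2/|\mu|$ for all $\mu\ne0$. If that verification turned out too tight, I would absorb the slack into the factor $4$, for example by using $|d_\nu|$ summed against $K(\alpha)$ more carefully.
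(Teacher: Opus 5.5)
Your proposal is correct and follows the paper's proof essentially step for step. Both arguments expand $T_{\mathrm{tgt}}$ into terms $h_\omega d_\nu I(\omega+\nu)$ and use the integration-by-parts estimate $|I(\mu)|\le 2/|\mu|$. Both use the symmetry step $|\omega+\nu|=|\omega-(-\nu)|\ge\delta(\alpha,\Omega_h)$, and both sum Lemma~\ref{lem:spectral-structure}(iii) over $|\Omega_j|\le K(\alpha)$ modes.

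Your two-regime treatment of small $|\mu|$ is already a complete proof, so no numerical check or extra slack in the constant is needed; it can also be done in one line via $|\sin(\mu\pi)|\le\pi|\mu|$. It is in fact more careful than the paper's termwise step $|\sin(n\pi)|/(\pi n^2)\le 1/(\pi|n|)$, which fails for small $|n|$. Your remark that $\mu=0$ cannot occur once $\delta>0$ is also correct.
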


\begin{proof}[Proof sketch]
Substituting Lemma~\ref{lem:commutator} and expanding in Fourier modes,
$T_{\mathrm{tgt}}$ reduces to a sum of integrals
$I(\omega+\nu) = \frac{1}{2\pi}\int_{-\pi}^\pi x\,e^{i(\omega+\nu)x}\,\mathrm{d}x$
over $(\omega,\nu)\in\Omega_h\times\Omega_j$.
Integration by parts gives $|I(n)|\leq 2/|n|$ for all $n\neq 0$,
and $I(0) = 0$ by oddness of $x$.
Since $|\omega+\nu| \geq \delta(\alpha,\Omega_h)$ for symmetric
$\Omega(\alpha)$, bounding each coefficient by
Lemma~\ref{lem:spectral-structure}(iii) and summing over $|\Omega_j|
\leq K(\alpha)$ modes yields~\eqref{eq:gap-bound}.
The bound is uniform in $\theta$ because the coefficient bound in
Lemma~\ref{lem:spectral-structure}(iii) is.
Full proof in Appendix~\ref{app:gradient-theory}.
\end{proof}

\begin{remark}[Self-interaction carries no target information]
The self-interaction term $T_{\mathrm{self}}$ involves only
cross-products of modes within $\Omega(\alpha)$ and contains no
information about the location of $\Omega_h$.
Increasing $\|\theta\|$ enlarges $|T_{\mathrm{self}}|$ but does
not direct it toward the target spectrum.
The suppression in Proposition~\ref{prop:gap} therefore cannot
be overcome by modifying the ansatz initialization.
\end{remark}

\begin{corollary}[Prefactor Displacement Bound]
\label{cor:displacement}
Under gradient descent with learning rate $\eta$ for $T$ steps,
\begin{equation}
  |\alpha_j^{(T)} - \alpha_j^{(0)}|
  \leq \eta T \cdot
  \frac{4\,K(\alpha^{(0)})\,\|M\|_{\mathrm{op}}\,\|H_j\|_{\mathrm{op}}}
       {\delta(\alpha^{(0)},\Omega_h)}
  \sum_{\omega\in\Omega_h}|h_\omega|
  \;+\; \eta T \cdot
  2\,K(\alpha^{(0)})^2\,\|M\|_{\mathrm{op}}^2\,\|H_j\|_{\mathrm{op}}.
  \label{eq:displacement}
\end{equation}
\end{corollary}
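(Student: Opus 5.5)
The bound follows from the triangle inequality over the $T$ gradient steps,
\[
|\alpha_j^{(T)}-\alpha_j^{(0)}|\le \eta\sum_{t=0}^{T-1}|\partial_{\alpha_j}\mathcal{L}(\theta^{(t)},\alpha^{(t)})|,
\]
together with the decomposition $\partial_{\alpha_j}\mathcal{L}=T_{\mathrm{tgt}}+T_{\mathrm{self}}$ of Proposition~\ref{prop:gap}. Each step then needs a bound on $|T_{\mathrm{tgt}}|$ and on $|T_{\mathrm{self}}|$ separately, each uniform in $\theta$. Uniformity in $\theta$ is what makes the simultaneous evolution of the ansatz parameters harmless.

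\textbf{Target term.} Proposition~\ref{prop:gap} gives $|T_{\mathrm{tgt}}|\le 4K\|M\|_{\mathrm{op}}\|H_j\|_{\mathrm{op}}\delta^{-1}\sum_\omega|h_\omega|$ at the current iterate, for every $\theta$.

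\textbf{Self-interaction term.} Write $T_{\mathrm{self}}=\frac{1}{2\pi}\int f\,\partial_{\alpha_j}f\,\mathrm{d}x$. Expand $f=\sum_{\mu\in\Omega(\alpha)}c_\mu e^{i\mu x}$ and $\partial_{\alpha_j}f=ix\sum_{\nu\in\Omega_j}d_\nu e^{i\nu x}$. Bound $|c_\mu|\le\|M\|_{\mathrm{op}}$, which holds since each Fourier coefficient is an average of a function bounded by $\|M\|_{\mathrm{op}}$; the same $\theta$-uniform reasoning gives Lemma~\ref{lem:spectral-structure}(iii) for $d_\nu$. Then bound each integral $\frac{1}{2\pi}\int x\,e^{i(\mu+\nu)x}\,\mathrm{d}x$ crudely by $\frac{1}{2\pi}\int|x|\,\mathrm{d}x=\pi/2$. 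Summing over at most $K^2$ pairs gives
\[
|T_{\mathrm{self}}|\le K^2\|M\|_{\mathrm{op}}\cdot 2\|M\|_{\mathrm{op}}\|H_j\|_{\mathrm{op}}\cdot C,
\]
with $C$ an absolute constant. The constant must be matched to the stated factor $2$. Bounding $|I(n)|$ by $1$ (from $|x|\le\pi$, valid on the normalized domain of Assumption~\ref{ass:symmetric-domain}, after absorbing $1/\pi$) would need checking. If needed, I would instead use $|G_j|\le 2\|M\|\|H_j\|$ from Lemma~\ref{lem:spectral-structure}(ii) together with $|f|\le\|M\|$. This direct route gives $|T_{\mathrm{self}}|\le \pi\|M\|^2\|H_j\|$, which is in fact tighter than the $K^2$ form whenever $K\ge 2$, so either route suffices up to the constant.

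\textbf{Main obstacle.} The stated bound uses $K(\alpha^{(0)})$ and $\delta(\alpha^{(0)},\Omega_h)$, but the per-step bounds involve the current $\alpha^{(t)}$. Two facts are needed.
\begin{enumerate}
\item $K(\alpha^{(t)})\le K(\alpha^{(0)})$. This holds generically, because $K$ depends only on the number of gates: it is at most $3^k$ for arbitrary $\alpha$ and is attained at generic initializations.
\item $\delta$ does not shrink substantially along the trajectory.
\end{enumerate}
For the second fact I would read the corollary as a bound valid while the trajectory stays in the regime where $\delta(\alpha^{(t)},\Omega_h)\ge\delta(\alpha^{(0)},\Omega_h)$. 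This holds, for example, when the target lies above the accessible range and $\max\Omega(\alpha)$ has not increased. An alternative is a bootstrap argument. If the right-hand side is small compared with $\delta(\alpha^{(0)})$, each prefactor moves by at most that amount, so $\max\Omega$ moves by at most a comparable multiple and $\delta$ stays comparable. Making this continuity argument clean is the delicate part. I would state it explicitly as an assumption, or replace $\delta(\alpha^{(0)})$ by $\min_t\delta(\alpha^{(t)})$, rather than hide it.

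Summing the per-step bound $T$ times then yields~\eqref{eq:displacement}.
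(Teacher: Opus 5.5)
Your skeleton is exactly the paper's proof: the triangle inequality over the $T$ steps, the split $\partial_{\alpha_j}\mathcal{L}=T_{\mathrm{tgt}}+T_{\mathrm{self}}$, Proposition~\ref{prop:gap} for the target part, a $\delta$-free constant for the self part, then evaluation at $\alpha^{(0)}$. Making the freeze at $\alpha^{(0)}$ an explicit hypothesis (or using $\min_t\delta(\alpha^{(t)},\Omega_h)$) is the rigorous form of the paper's informal remark that the bound is ``self-consistent in the failure regime''.

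Where you differ is the self-interaction term. The paper expands in Fourier modes and invokes $|\mu+\nu|\ge 1$ for integer-frequency circuits. The factor $2$ then rests on $|I(n)|=1/|n|\le 1$ at nonzero integers $n$, where $\sin(n\pi)=0$. It does not come from $|x|\le\pi$, which gives only $\pi/2$. It also fails once prefactors leave the integers: for real $n$, $\sup_n|I(n)|\approx 1.37$. Your pointwise bound $|T_{\mathrm{self}}|\le\frac{1}{2\pi}\int_{-\pi}^{\pi}|f|\,|x|\,|G_j|\,\mathrm{d}x\le\pi\|M\|_{\mathrm{op}}^2\|H_j\|_{\mathrm{op}}$ is correct and uniform in $\theta$ and $\alpha$. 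It needs no integrality and is dominated by the stated term whenever $K\ge 2$. Use it and drop the $K^2$ expansion.

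Your Fact~1, however, is wrong in exactly the regime the corollary addresses. For $H_j=\tfrac12\sigma$, $K$ attains its maximum $3^k$ only when the $\alpha_j$ satisfy no nontrivial relation $\sum_j m_j\alpha_j=0$ with $m_j\in\{-2,\dots,2\}$. The unary initializations are degenerate: $K(\{1,1,1\})=7$, and $K(\{1.01,1.02,1.03\})=23$ because $\alpha_1-\alpha_2+\alpha_3=\alpha_2$. Small perturbations can split coincident frequencies but never merge distinct ones. So near $\alpha^{(0)}$ you get $K(\alpha^{(t)})\ge K(\alpha^{(0)})$, typically strictly, which is the opposite of what you need. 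Unlike $\delta$, which is $1$-Lipschitz in $\alpha$ for the $\ell^1$ norm, this cannot be repaired by your bootstrap. The $K(\alpha^{(0)})$ in the target term therefore needs its own explicit hypothesis, or replacement by $\sup_t K(\alpha^{(t)})\le 3^k$. The paper's proof makes the same unjustified substitution.

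Finally, with $\mathcal{L}=\frac{1}{2\pi}\int_{-\pi}^{\pi}(f-h)^2\,\mathrm{d}x$ one actually has $\partial_{\alpha_j}\mathcal{L}=2(T_{\mathrm{tgt}}+T_{\mathrm{self}})$. The paper drops this factor, and you inherit the omission by quoting its decomposition. This doubles the target-term constant. Your self-term bound still fits, since $2\pi\le 2K^2$ for $K\ge 2$.
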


For the experimental setup ($K=7$ for a 3-qubit parallel circuit
with prefactors near $\{1,1,1\}$, whose accessible spectrum
$\{0, \pm 1, \pm 2, \pm 3\}$ has 7 elements;
$\|M\|_{\mathrm{op}}=1$,
$\|H_j\|_{\mathrm{op}}=\tfrac{1}{2}$,
$\sum|h_\omega|\approx 3$, $\delta=8$, $\eta=0.001$, $T=5{,}000$),
the target-driven term in~\eqref{eq:displacement} evaluates to
$\approx 2.6$, consistent with the empirically observed maximum
displacement of $\approx 1$--$7$ depending on learning rate
(Figure~\ref{fig:grad_prefactor_analysis}(a)).
The discrepancy reflects the conservatism of the supremum step;
see Remark~\ref{rem:adam} below.

\begin{remark}[Gradient descent vs.\ adaptive optimizers]
\label{rem:adam}
The displacement bound~\eqref{eq:displacement} is stated for
vanilla gradient descent and does not directly transfer to
adaptive optimizers such as Adam~\citep{kingma2017adammethodstochasticoptimization}.
A formal analog for Adam would require separate treatment accounting
for the second-moment normalization schedule.
The self-interaction term in~\eqref{eq:displacement} is also highly
conservative in practice, as it bounds gradient magnitude uniformly
over all training steps regardless of convergence behavior.
Empirically, observed displacements are consistent with the
target-driven term alone
(Figure~\ref{fig:grad_prefactor_analysis}(a)), suggesting that
the self-interaction contribution is substantially smaller than
its worst-case bound in the regimes tested.
\end{remark}

\section{Ternary Grid Initialization}
\label{sec:ternary}

\subsection{Definition and Coverage}
\label{sec:ternary-def}

\begin{definition}[Ternary Trainable-Frequency VQC]
\label{def:ternary-tf}
A ternary TF VQC initializes prefactors with exponentially-spaced
values $\alpha_j^{(0)} = 3^{j-1}$ for $j \in \{1,\ldots,k\}$,
which then become trainable alongside the ansatz parameters $\theta$.
For $k$ encoding gates, the initial accessible spectrum is
\begin{equation}
  \Omega^{(0)}_{\mathrm{grid}}
  = \Bigl\{n\in\mathbb{Z} : |n| \leq \tfrac{3^k - 1}{2}\Bigr\},
\end{equation}
covering all integers in $[{-}(3^k{-}1)/2,\,(3^k{-}1)/2]$
\citep{Shin_2023, Peters_2023}.
\end{definition}

Within the truncated Fourier series model of Section~\ref{sec:background},
the covering property of integer sets immediately implies that every
target frequency $\omega^*\in[-\omega_{\max}, \omega_{\max}]$
lies within $\tfrac{1}{2}$ unit of some grid point, since the nearest
integer to any real number is at most $\tfrac{1}{2}$ away.
This holds independently of the random seed and any prior knowledge
of which specific frequencies the target contains.

\begin{proposition}[Ternary TF Parameter Requirements]
\label{prop:ternary-tf}
A ternary TF circuit initialized to cover frequencies up to
$\omega_{\max}$ requires
$k = \lceil\log_3(2\omega_{\max}+1)\rceil$
encoding gates and at least $2\omega_{\max}+1$ ansatz parameters
to independently control all Fourier coefficients accessible at
initialization.
As prefactors evolve during training, the accessible spectrum
shifts accordingly; the parameter counts here refer to the
initial configuration.
\end{proposition}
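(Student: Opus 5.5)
The proof has two parts: the encoding-gate count, and the ansatz-parameter count.

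\emph{Gate count.} By Definition~\ref{def:ternary-tf}, $k$ ternary gates initialized at $\alpha_j^{(0)}=3^{j-1}$ give the spectrum
\[
\Omega^{(0)}_{\mathrm{grid}}=\{n\in\mathbb{Z}:|n|\le (3^k-1)/2\}.
\]
To see why, note that each gate with $H_j=\tfrac12\sigma$ contributes eigenvalue differences in $\{-1,0,1\}\cdot 3^{j-1}$. The reachable sums $\sum_j s_j 3^{j-1}$ with $s_j\in\{-1,0,1\}$ are exactly the balanced-ternary representations of the integers in $[-(3^k-1)/2,(3^k-1)/2]$, and these representations are unique. Covering every integer $n$ with $|n|\le\omega_{\max}$ is therefore equivalent to $(3^k-1)/2\ge\omega_{\max}$, i.e.\ $3^k\ge 2\omega_{\max}+1$. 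The minimal integer choice is $k=\lceil\log_3(2\omega_{\max}+1)\rceil$. I will also show necessity: any set of $k$ gates of this form yields at most $3^k$ distinct sums. Hence $|\Omega|\le 3^k$, and covering the $2\omega_{\max}+1$ integers forces $3^k\ge 2\omega_{\max}+1$. This shows the ternary choice is optimal among such encodings, consistent with \citet{Peters_2023}.

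\emph{Parameter count.} By Theorem~\ref{thm:fourier}, the output is $f=\sum_{\omega\in\Omega}c_\omega e^{i\omega x}$ with $c_{-\omega}=\overline{c_\omega}$. The set of real functions realizable on $\Omega^{(0)}_{\mathrm{grid}}$ is a real vector space of dimension
\[
1+2\cdot\tfrac{3^k-1}{2}=3^k\ge 2\omega_{\max}+1,
\]
counting $c_0$ real and each pair $\pm n$ as one complex number, i.e.\ two real degrees of freedom. To control all coefficients independently, the map $\theta\mapsto (c_\omega(\theta))_{\omega}$ must have image with nonempty interior in this space. I will argue this via a rank bound: the map is smooth, so the rank of its Jacobian is at most $p=\dim\theta$. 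By Sard's theorem, if $p<2\omega_{\max}+1$ the image has measure zero in the coefficient space, so independent control is impossible. This gives $p\ge 2\omega_{\max}+1$, with $\omega_{\max}$ read as $(3^k-1)/2$ if the full initial grid is to be controlled. For the necessity direction this suffices. Sufficiency, that universal ansatz layers (as in Section~\ref{sec:scaling}) supply enough parameters, can be cited from \citet{Schuld_2021}, since the proposition only claims a lower bound.

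The main obstacle is that the statement is phrased loosely ("requires", "at least"). I will fix the interpretation as a necessity claim for covering the integer band $[-\omega_{\max},\omega_{\max}]$. The Sard/rank argument must use the real dimension of the coefficient space correctly, so that the count comes out to $2\omega_{\max}+1$ rather than double it. The closing remark about evolving prefactors needs no proof. I will only note that all counts are evaluated at $\alpha^{(0)}$, where distinct sums coincide with distinct integers.
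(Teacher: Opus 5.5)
Your proposal is correct and follows the same two-part structure as the paper's proof: the coverage condition $\omega_{\max}\le(3^k-1)/2$ gives $k=\lceil\log_3(2\omega_{\max}+1)\rceil$, and the parameter bound comes from counting the $3^k\ge 2\omega_{\max}+1$ independent real Fourier degrees of freedom at initialization. The paper cites \citet{Shin_2023, Peters_2023} and \citet{Schuld_2021} for these two facts, whereas you justify them directly via the balanced-ternary representation, a $|\Omega|\le 3^k$ optimality bound over Pauli-generator encodings that the paper does not state, and a rank/Sard argument using the correct real dimension; this is sound provided $\omega_{\max}$ is read as an integer, as the paper implicitly does.
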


\begin{proof}
The requirement $\omega_{\max} \leq (3^k-1)/2$ gives
$k \geq \log_3(2\omega_{\max}+1)$, so $k =
\lceil\log_3(2\omega_{\max}+1)\rceil$ suffices.
The ansatz parameter count follows from the fact that
$|\Omega^{(0)}_{\mathrm{grid}}| = 2\lfloor(3^k-1)/2\rfloor+1
\geq 2\omega_{\max}+1$ independent Fourier coefficients must be
controllable, requiring at least that many free ansatz parameters
\citep{Schuld_2021}.
\end{proof}

\paragraph{Practical frequency specification.}
The approach requires specifying $\omega_{\max}$ before training.
In practice this can be determined by progressively increasing $k$
until validation performance saturates; the logarithmic scaling
means this search requires only $\mathcal{O}(\log_3\omega_{\max})$
trials.
Alternatively, a spectral pre-analysis of the target values ---
for instance via a fast Fourier transform along one-dimensional
slices of the input space --- gives a practical estimate of the
required frequency range per feature, though such marginal
analyses may underestimate the full multivariate spectral content.

\subsection{Formal Resolution of the Reachability Limitation}
\label{sec:ternary-theory}

Ternary grid initialization simultaneously addresses the two aspects
of the reachability barrier identified in Section~\ref{sec:reachability}:
the large required prefactor displacement and the suppressed
target-driven gradient signal.

\begin{corollary}[Ternary Initialization Voids the Reachability Obstruction]
\label{cor:ternary}
For ternary grid initialization $\alpha_j^{(0)} = 3^{j-1}$ with
$k = \lceil\log_3(2\omega_{\max}+1)\rceil$ encoding gates, every
target frequency $\omega^*\in[-\omega_{\max},\omega_{\max}]$ lies
within $\tfrac{1}{2}$ unit of the accessible spectrum at
initialization, i.e. $\delta(\alpha^{(0)},\{\omega^*\})\leq\tfrac{1}{2}$.
Consequently, the bound of Proposition~\ref{prop:gap}, which forces
the target-driven gradient to be small only when the spectral gap
$\delta(\alpha,\Omega_h)$ is large, does not certify suppression at
$\delta(\alpha^{(0)},\Omega_h)\leq\tfrac{1}{2}$: the certified
obstruction is voided and the target spectrum lies close to the
initial accessible spectrum.
This is a necessary condition for reliable convergence rather than a
guarantee of it; sufficiency is established empirically in
Section~\ref{sec:experiments}.
\end{corollary}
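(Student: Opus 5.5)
The argument has two parts: a covering statement for the initial spectrum, followed by a short observation about what Proposition~\ref{prop:gap} does and does not certify.

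\textbf{Step 1: the gate count gives enough range.} With $k = \lceil\log_3(2\omega_{\max}+1)\rceil$ we have $3^k \geq 2\omega_{\max}+1$, so $(3^k-1)/2 \geq \omega_{\max}$. This is the inequality already used in the proof of Proposition~\ref{prop:ternary-tf}.

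\textbf{Step 2: the initial spectrum is the full integer interval.} By Definition~\ref{def:ternary-tf}, with $\alpha^{(0)}_j = 3^{j-1}$ the accessible frequencies are the sums $\sum_j \tfrac{1}{2}(\lambda_j-\lambda'_j)\,3^{j-1}$ over eigenvalue differences of the Pauli generators $H_j$. These are exactly $\sum_j \epsilon_j 3^{j-1}$ with $\epsilon_j\in\{-1,0,1\}$. Balanced ternary representation shows that this set is precisely $\{n\in\mathbb{Z}: |n|\leq (3^k-1)/2\}$. The paper quotes this from \citet{Shin_2023, Peters_2023}, so I will invoke it rather than reprove it. The one point to check is that the spectrum for a product of single-qubit encodings is the Minkowski sum of the individual spectra $\{-\alpha_j,0,\alpha_j\}$; this follows from Theorem~\ref{thm:fourier}.

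\textbf{Step 3: every target frequency is within $\tfrac12$ of the grid.} Fix $\omega^*\in[-\omega_{\max},\omega_{\max}]$ and let $n$ be a nearest integer to $\omega^*$, so $|\omega^*-n|\leq\tfrac12$. I need $n\in\Omega^{(0)}_{\mathrm{grid}}$, i.e.\ $|n|\leq(3^k-1)/2$.
\begin{itemize}
\item If $\omega_{\max}$ is an integer, then $|n|\leq\lceil|\omega^*|\rceil\leq\omega_{\max}\leq(3^k-1)/2$.
\item If $\omega_{\max}$ is not an integer, the clean fallback is to choose $n$ toward zero: $n=\operatorname{sign}(\omega^*)\lfloor|\omega^*|\rfloor$ when the fractional part is small, and otherwise note that $(3^k-1)/2$ is an integer $\geq\omega_{\max}$, so $\lceil|\omega^*|\rceil\leq(3^k-1)/2$ anyway.
\end{itemize}
In both cases the rounding stays inside the grid, which gives $\delta(\alpha^{(0)},\{\omega^*\})\leq\tfrac12$. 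Taking the minimum over $\omega\in\Omega_h\subset[-\omega_{\max},\omega_{\max}]$ then yields $\delta(\alpha^{(0)},\Omega_h)\leq\tfrac12$.

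\textbf{Step 4: interpreting the "consequently" clause.} This is a statement about the bound~\eqref{eq:gap-bound}, not a new inequality. When $\delta\leq\tfrac12$, the right-hand side of~\eqref{eq:gap-bound} is at least $8K\|M\|_{\mathrm{op}}\|H_j\|_{\mathrm{op}}\sum|h_\omega|$, which is no smaller than the trivial bound. To see this, use Lemma~\ref{lem:spectral-structure}(ii) together with $|x|\leq\pi$ to get $|T_{\mathrm{tgt}}|\leq 2\pi\|M\|_{\mathrm{op}}\|H_j\|_{\mathrm{op}}\sup|h|$, where $\sup|h|\le\sum|h_\omega|$. So the gap bound certifies no suppression beyond what holds trivially. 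If $\delta=0$, Proposition~\ref{prop:gap} does not apply at all, since it assumes $\delta(\alpha,\Omega_h)>0$.

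The necessity/sufficiency sentence is interpretive, and I would state explicitly that no sufficiency claim is proved.

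\textbf{Main obstacle.} The only delicate point is Step 3's boundary case. Step 2 covers only the integers, so I have to make sure the rounding of an $\omega^*$ near $\pm\omega_{\max}$ does not leave the grid. The integrality of $(3^k-1)/2$ is what resolves this.
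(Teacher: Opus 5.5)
Your proposal is correct and follows the same route as the paper's proof. You round to the nearest integer in the ternary grid $\{n\in\mathbb{Z}:|n|\le(3^k-1)/2\}$, then observe that $1/\delta\ge 2$ leaves the bound of Proposition~\ref{prop:gap} unable to certify suppression. You also make explicit two points the paper leaves implicit: the integrality of $(3^k-1)/2$ keeps the rounded point inside the grid (this alone covers both of your cases), and at $\delta\le\tfrac12$ the gap bound is weaker than the trivial bound $|T_{\mathrm{tgt}}|\le 2\pi\|M\|_{\mathrm{op}}\|H_j\|_{\mathrm{op}}\sum_{\omega}|h_\omega|$, since $8K\ge 8>2\pi$.
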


\begin{proof}
Every real number lies within $\tfrac{1}{2}$ of the nearest integer,
and the accessible spectrum at initialization contains all integers
up to $\omega_{\max}$ by construction with
$k = \lceil\log_3(2\omega_{\max}+1)\rceil$ encoding gates; hence every
$\omega^*\in[-\omega_{\max},\omega_{\max}]$ satisfies
$\delta(\alpha^{(0)},\{\omega^*\})\leq\tfrac{1}{2}$.
The bound of Proposition~\ref{prop:gap} is an upper bound proportional
to $1/\delta(\alpha,\Omega_h)$: it forces the target-driven gradient
to be small only when $\delta$ is large.
At $\delta(\alpha^{(0)},\Omega_h)\leq\tfrac{1}{2}$ the factor
$1/\delta\geq 2$, so the bound does not constrain the target-driven
gradient to be small and hence does not certify suppression.
\end{proof}

Beyond voiding the reachability obstruction, ternary initialization
also inherits the near-uniform redundancy property established by
\citet{holzer2024spectralinvariancemaximalityproperties}: at
initialization, $R(\omega)\in\{1,2\}$ across the accessible spectrum,
avoiding the redundancy decay that further suppresses high-frequency
gradient signals under unary encoding
(Section~\ref{sec:redundancy}).

The contrast with unary initialization is structural.
For the experimental setting of Section~\ref{sec:reachability}
($\Omega_2 = \{11, 11.2, 13\}$, $\alpha^{(0)} = \{1.01,1.02,1.03\}$),
unary initialization faces a large spectral gap $\delta = 8$, at which
the bound of Proposition~\ref{prop:gap} on the target-driven gradient
is small and---as Section~\ref{sec:reachability} shows
empirically---training fails to reach $\Omega_2$.
Ternary initialization instead guarantees $\delta \leq \tfrac{1}{2}$
for every in-range target (here $\delta \leq 0.2$ for the worst-case
target frequency $11.2$, whose nearest grid point is $11$), so this
bound no longer constrains the gradient to be small.

\begin{remark}[Two operational regimes]
When the target has few frequencies relative to $\omega_{\max}$
(sparse regime: $k_{\mathrm{opt}} \ll \log_3(2\omega_{\max}+1)$),
unary TF requires fewer encoding gates in principle.
However, as Corollary~\ref{cor:ternary} shows, this efficiency is
unreachable in practice unless target frequencies happen to lie near
the initialization.
Ternary initialization trades a modest increase in encoding gate
count for reliable convergence across the full target range.
When target frequencies span the range or are unknown in advance
(dense or agnostic regime), ternary initialization achieves both
reliable convergence and an exponential reduction in encoding gate
count relative to fixed unary approaches --- the regime where its
full advantage applies.
\end{remark}

\subsection{Encoding Gate Complexity}
\label{sec:gate-complexity}

Beyond resolving the reachability limitation, ternary initialization
inherits the gate efficiency of fixed ternary encodings, which we
now state formally.

\begin{corollary}[Encoding Gate Complexity]
\label{cor:gate-complexity}
A ternary TF circuit can implement any target single-qubit encoding
$e^{i\omega_{\max} x H}$ to Frobenius norm precision $\varepsilon$
--- and thereby approximate target functions with frequency content
up to $\omega_{\max}$ --- using
\begin{equation}
  k \in \mathcal{O}\!\left(
    \log_3\omega_{\max} + \log_3\tfrac{1}{\varepsilon}
  \right)
\end{equation}
encoding gates, compared to
$\mathcal{O}(\omega_{\max} + \varepsilon^{-2})$ for unary encoding.
\end{corollary}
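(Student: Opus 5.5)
The plan is to reduce the claim to a number-representation argument for the single-qubit encoding unitary, followed by a perturbative correction for non-integer $\omega_{\max}$. Since the encoding gates act on the same generator $H$ (or commuting copies of it), a product of encoding gates $\prod_j e^{i\alpha_j x H}$ equals $e^{i(\sum_j \alpha_j) x H}$. The task is therefore to realize a target prefactor $\omega_{\max}$ as a sum $\sum_j s_j \alpha_j$, up to an error whose effect on the unitary is at most $\varepsilon$. Here each $s_j \in \{-1,0,1\}$ is implemented by gate repetition, sign flips via conjugation by the ansatz, or simply by training $\alpha_j$.

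\textbf{Step 1: the integer part via balanced ternary.} Every integer $n$ with $|n| \le (3^{k_1}-1)/2$ has a balanced-ternary representation $n = \sum_{j=1}^{k_1} s_j 3^{j-1}$ with $s_j \in \{-1,0,1\}$. This is the same covering fact underlying the ternary spectrum $\Omega_k$ of \citet{Shin_2023, Peters_2023}. Writing $\omega_{\max} = n + r$ with $n$ the nearest integer and $|r| \le \tfrac12$, I would use $k_1 = \lceil \log_3(2\omega_{\max}+1)\rceil$ gates initialized at $3^{j-1}$ to produce $e^{i n x H}$ exactly. This accounts for the $\log_3 \omega_{\max}$ term.

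\textbf{Step 2: the fractional part via negative powers of three.} I would append gates with prefactors $3^{-1}, 3^{-2}, \dots, 3^{-k_2}$ and again use balanced-ternary digits. Any $|r| \le \tfrac12$ is then approximated by $\sum_{m=1}^{k_2} s_m 3^{-m}$ to within $3^{-k_2}/2$. Alternatively, since prefactors are trainable, a single gate can absorb the residual exactly. I would nonetheless present the digit construction so that the count holds even under fixed grid values, and remark that trainability only improves it. The Frobenius error of the unitary satisfies
\[
\bigl\|e^{i\omega_{\max} x H} - e^{i\tilde\omega x H}\bigr\|_F \le |x|\,\|H\|_F\,|\omega_{\max}-\tilde\omega|,
\]
which follows from $\|e^{iA}-e^{iB}\| \le \|A-B\|$ for commuting Hermitian $A,B$. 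With $|x| \le \pi$ (Assumption~\ref{ass:symmetric-domain}), $\|H\|_F = 1/\sqrt2$, and an error of $3^{-k_2}$, choosing $k_2 = \lceil \log_3(c/\varepsilon)\rceil$ gives precision $\varepsilon$. Summing yields $k = k_1 + k_2 \in \mathcal{O}(\log_3\omega_{\max} + \log_3 \tfrac1\varepsilon)$.

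\textbf{Step 3: the unary comparison.} For unary encoding I would quote the corresponding result of \citet{perez_Salinas_2025} and \citet{PhysRevA.104.012405}. With unit prefactors, reaching frequency $\omega_{\max}$ needs $\omega_{\max}$ repetitions, and approximating a non-integer rotation by interleaving fixed rotations costs $\mathcal{O}(\varepsilon^{-2})$ further gates. The transfer from unitary precision to function approximation then follows from $|f - \tilde f| \le 2\|M\|_{\mathrm{op}}\|U-\tilde U\|_{\mathrm{op}} \le 2\|M\|_{\mathrm{op}}\|U-\tilde U\|_F$.

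The main obstacle I expect is the sign digits $s_j = -1$ together with the multi-qubit setting. Negating a rotation requires conjugating by a Pauli that anticommutes with $H$, absorbed into the neighbouring ansatz layers. I also need to justify that the encoding gates effectively act on one generator, so that their exponents add, rather than acting in parallel on different qubits. I would handle this by placing all encoding gates on one qubit in re-uploading fashion, letting the universal ansatz layers implement the intervening identities and Pauli conjugations. If sign flips prove awkward, the fallback is unbalanced base-3 digits $\{0,1,2\}$ with gate duplication. This changes only constant factors and leaves the logarithmic complexity intact.
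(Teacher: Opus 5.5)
Your argument is correct for the statement as posed, but it takes a genuinely different route from the paper's. The paper (Appendix~\ref{app:gate-complexity}) keeps the encoding gates \emph{fixed} at $e^{i3^j\pi x\sigma_z}$ and never uses trainability of $\alpha$. The ternary range $M=(3^k-1)/2$ covers the integer part of $w$. The fine precision then comes from the \emph{ansatz}: via generalized QSP, it is asked to realize the Ces\`aro mean of a periodized extension of $e^{iwx}$ as a degree-$M$ polynomial. This gives sup error $\mathcal{O}(M^{-1}\log M)$ and Frobenius error $\mathcal{O}(\sqrt{k}\,3^{-k/2})$.

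You instead encode $\omega_{\max}$ in the prefactors themselves. You set the interleaved layers to identities or anticommuting Paulis so that the exponents add along one axis. A signed-digit expansion in $3^{j}$ and $3^{-m}$ then gives exponent error $3^{-k_2}/2$. The bound $\|e^{iA}-e^{iB}\|_F\le\|A-B\|_F$ for commuting Hermitian $A,B$ turns this into Frobenius error $\mathcal{O}(3^{-k_2})$.

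Your route is elementary and has a better rate. It also avoids the step the paper delegates to QSP: that this particular degree-$M$ polynomial can be realized with only $k$ signal gates, even though a single-qubit WSW circuit has only $\mathcal{O}(k)$ ansatz parameters against $2M+1=3^k$ coefficients. The price is that your route relies entirely on setting prefactors freely: zero digits via $\alpha_j=0$, and values $3^{-m}$ outside the grid $\{1,3,\dots,3^{k-1}\}$. In the TF setting it therefore collapses to the one-gate choice $\alpha_1=\omega_{\max}$ that you mention yourself. So it proves the corollary, but says nothing about what the paper is after: that ternary TF circuits inherit the gate count of fixed ternary encoding.

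Your side claim that the digit construction works ``even under fixed grid values'' does not hold as written. Within your single-axis construction, conjugation by the ansatz can flip a gate's sign but can never switch it off. Two things follow.
\begin{itemize}
\item With fixed odd prefactors, $\sum_j \pm 3^{j}$ always has the parity of the number of integer-prefactor gates.
\item The single-copy fractional sums $\sum_{m=1}^{k_2} \pm 3^{-m}$ form a Cantor-like set whose points all have modulus greater than $1/6$.
\end{itemize}
Together these leave some targets at exponent distance above $1/6$, however many gates you add. Zero digits must instead be produced by cancelling $\pm$ pairs of duplicated gates. That costs only a constant factor, but it is essential. The unbalanced-digit fallback with digits $\{0,1,2\}$ does not help, since it too needs a gate that contributes nothing.
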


The proof proceeds via Fej\'{e}r's theorem and Ces\`{a}ro mean
approximation and is given in Appendix~\ref{app:gate-complexity}.
The key steps are: ternary coverage of the integer component requires
$k = \mathcal{O}(\log_3\omega_{\max})$; fine precision is achieved
by the Ces\`{a}ro mean approximation error
$\mathcal{O}(\sqrt{k}\cdot 3^{-k/2})$, giving
$k = \mathcal{O}(\log_3(1/\varepsilon))$ for the precision requirement.

We note that while encoding gate count is reduced exponentially,
total circuit complexity (encoding and ansatz gates combined) remains
$\mathcal{O}(\omega_{\max})$ in both cases, since independently
controlling all $2\omega_{\max}+1$ accessible Fourier coefficients
requires $\mathcal{O}(\omega_{\max})$ ansatz parameters regardless
of the encoding strategy.
The advantage of ternary encoding is specifically in the encoding
gates, which are particularly resource-constrained on near-term
quantum hardware.

\paragraph{Total circuit resources.}
To make this concrete beyond the encoding-gate count, we report the
full resource profile of the synthetic benchmark circuit
(Table~\ref{tab:resources}), where the encoding-gate claim is made.
All four compared approaches use the identical 3-qubit circuit---three
single-qubit encoding gates and one full $\mathrm{SU}(2^3)$ ansatz
block (63 parameters)---so their ansatz parameter count, two-qubit
gate count, and depth are identical; they differ only in the prefactor
values and, for the trainable variants, in carrying three trainable
encoding parameters.
We therefore report the two trainable variants, which upper-bound the
resource cost; the fixed variants are identical except that they carry
no trainable encoding parameters.
Transpiling to a $\{R_x, R_y, R_z, \mathrm{CNOT}\}$ gate set, the
ansatz block realizes $174$ gates ($136$ single-qubit, $38$ two-qubit)
at circuit depth $110$; the three encoding gates add only single-qubit
rotations, giving a full circuit of $177$ gates at depth $111$ with
the two-qubit count unchanged at $38$.
The two encodings are indistinguishable in every resource column and
differ only in the accessible spectral reach obtained from the same
encoding budget---$\omega_{\max}=3$ for unary versus $13$ for
ternary---which is the complement of Corollary~\ref{cor:gate-complexity}
(equal reach at exponentially fewer encoding gates) viewed at fixed
encoding cost.
Gradients are obtained by reverse-mode automatic differentiation in
simulation, at cost comparable to one forward evaluation per step;
on hardware the parameter-shift rule would instead require two circuit
evaluations per trainable parameter, so the per-step gradient cost
scales with the total trainable-parameter count (with the standard
caveat that general $\mathrm{SU}(2^n)$ blocks require a
generator-decomposed shift rule).

\begin{table}[t]
\centering
\caption{Circuit resources for the synthetic benchmark (3-qubit,
target $\Omega_2$). Encoding and ansatz parameters are listed
separately; two-qubit gate count and depth are for the ansatz block
transpiled to $\{R_x,R_y,R_z,\mathrm{CNOT}\}$. All resource columns are
identical across the two approaches; only the accessible reach
$\omega_{\max}$ obtained from the three encoding gates differs. The
fixed-prefactor variants coincide with these except for carrying no
trainable encoding parameters.}
\label{tab:resources}
\begin{tabular}{lcccccc}
\toprule
Approach & Enc.\ gates & Enc.\ params & Ansatz params
& 2-qubit gates & Depth & Reach $\omega_{\max}$ \\
\midrule
Trainable Unary            & 3 & 3 & 63 & 38 & 110 & 3 \\
Trainable Ternary (ours)   & 3 & 3 & 63 & 38 & 110 & 13 \\
\bottomrule
\end{tabular}
\end{table}

\section{Experiments}
\label{sec:experiments}

Having characterized the reachability limitation empirically and
formally in Section~\ref{sec:reachability}, we now validate that
ternary grid initialization resolves it across synthetic and
real-world settings.
All experiments use
PennyLane~\citep{bergholm2022pennylaneautomaticdifferentiationhybrid}
with JAX/JIT compilation~\citep{47008} and the Optax Adam
optimizer~\citep{optax2020github} at learning rate $\eta = 0.001$
for $5{,}000$ steps in minibatches of $40$.
Inputs are normalized to $[-\pi,\pi]$; outputs are scaled to $[-1,1]$
via MinMaxScaler.
We use an 80/20 train/test split and report $R^2$ on the held-out set,
summarized as median and interquartile range (IQR) across 10 random
weight initializations.
Ansatz blocks use SpecialUnitary gates: full $\mathrm{SU}(2^n)$ blocks
for the 3-qubit (63 parameters) and 4-qubit (255 parameters) circuits,
and a nearest-neighbor brick-wall ansatz of two-qubit
$\mathrm{SU}(4)$ blocks (6 layers of 7 blocks) on 8 qubits with
measurement on the central qubit (630 parameters) for the California
Housing experiment---chosen for parameter economy relative to a full
$\mathrm{SU}(2^8)$ block, which would require $65{,}535$ parameters.
Four approaches are compared throughout the main text: Fixed Unary,
Trainable Unary ($\alpha^{(0)} = \{1.01, 1.02, 1.03\}$), Fixed Ternary,
and Trainable Ternary ($\alpha^{(0)} = \{1, 3, 9\}$, ours).
The random-frequency experiment of Appendix~\ref{app:random-frequencies}
compares the two trainable strategies at matched encoding cost across
circuit sizes $k \in \{3,\dots,6\}$ and a grid of target-favorability
regimes spanning the frequency and amplitude priors.

\subsection{Synthetic Benchmark}
\label{sec:synthetic}

We construct target functions as truncated Fourier series with randomly
sampled coefficients ($a_i, b_i, c_0 \sim \mathrm{Uniform}([0,1))$)
at two frequency configurations.
The first, $\Omega_1 = \{1, 1.2, 3\}$, replicates the original
\citet{Jaderberg_2024} setting.
The second, $\Omega_2 = \{11, 11.2, 13\}$, shifts the entire spectrum
by 10 units to isolate whether optimization can drive prefactors across
a genuine spectral gap, using 100 equally spaced points over
$[-\pi,\pi]$.

\paragraph{Results.}
Figure~\ref{fig:unary_lines} showed all four approaches qualitatively;
we now report the quantitative outcomes.
On $\Omega_1$ all approaches converge, confirming the architecture
is expressive enough when target frequencies lie within the
reachable range.
On $\Omega_2$, unary approaches fail as characterized in
Section~\ref{sec:reachability}.
Fixed Ternary achieves moderate performance, confirming that
frequency coverage is important but not sufficient on its own
--- trainable prefactors provide additional fine-tuning that
closes the gap between integer grid points and the non-integer
target frequency $11.2$.
Trainable Ternary achieves median $R^2 = 0.9969$ with a minimum
of $0.983$; every one of the 100 runs reaches $R^2 > 0.95$.

\paragraph{Robustness across frequency offsets.}
Figure~\ref{fig:r2s_jadGaussianShift} shows performance as the
frequency offset varies continuously from $0$ to $9$ units
(spectra of the form $\Omega_1 + \mathcal{N}(\mu, 1)$ for
$\mu\in[0,9]$).
Trainable Ternary maintains $R^2 > 0.95$ across all offsets.
Unary approaches degrade monotonically with increasing shift.
Fixed Ternary tracks closely with Trainable Ternary across most
offsets, with Trainable Ternary providing consistent additional
gain from prefactor fine-tuning.

\begin{figure}[htbp]
\centering
\begin{minipage}[t]{0.48\linewidth}
    \vspace{0pt}
    \centering
    \includegraphics[width=\linewidth]
        {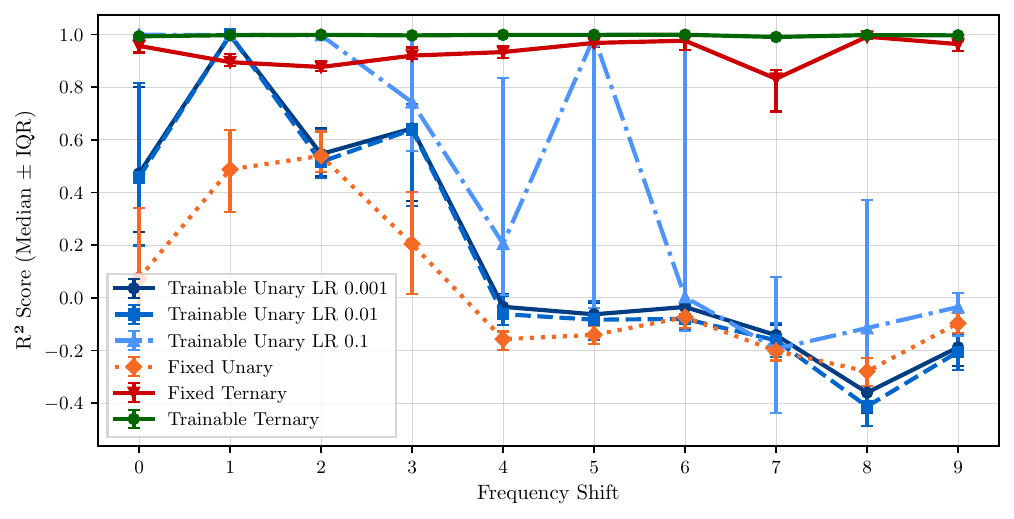}
    \captionof{figure}{Median $R^2$ and IQR across 100 runs as the
target frequency set shifts from $\Omega_1$ to $\Omega_1 + 9$
(target coefficients $a_i, b_i, c_0 \sim \mathrm{Uniform}([0,1))$).
Trainable Ternary maintains high performance across all offsets;
unary approaches degrade monotonically with increasing shift.}
    \label{fig:r2s_jadGaussianShift}
\end{minipage}
\hfill
\begin{minipage}[t]{0.48\linewidth}
    \vspace{0pt}
    \centering
    \captionof{table}{Comparison of optimization and initialization
    strategies on target $\Omega_2 = \{11, 11.2, 13\}$.
    Success is defined as $R^2 \geq 0.95$.}
    \label{tab:baseline_comparison}
    \vspace{4pt}
    \begin{tabular}{lcc}
    \toprule
    Method & Success & Median $R^2$ \\
    \midrule
    Unary init (Adam)       & $0\%$    & $-0.49$ \\
    Random init (Adam)      & $39\%$   & $-0.002$ \\
    CMA-ES (global)         & $25\%$   & $0.617$ \\
    \textbf{Ternary (ours)} & $\mathbf{100\%}$ & $\mathbf{0.988}$ \\
    \bottomrule
    \end{tabular}
\end{minipage}
\end{figure}

\paragraph{Randomly drawn targets across favorability regimes.}
The targets above are fixed by construction, and $\Omega_2$ in particular was chosen to
sit beyond the unary accessible range. To confirm that the advantage of ternary
initialization is neither specific to that choice nor an artifact of a favorable target,
we repeat the comparison with target frequencies drawn \emph{at random}, sweeping the
number of encoding gates $k \in \{3,4,5,6\}$ (accessible reach $\omega_{\max}=(3^k-1)/2$
growing from $13$ to $364$) and, at each scale, two priors that control target
favorability: an occupancy prior $p(\omega)\propto\omega^{-\beta}$ that moves the targets
from scattered across the full spectrum ($\beta=0$, uniform) to concentrated at the low end,
and a spectrum prior that sets each component's power $\propto\omega^{-\gamma}$, moving the
target energy from uniform ($\gamma=0$) to pink ($\gamma=1$) to brown ($\gamma=2$)
(Appendix~\ref{app:random-frequencies}). Across the entire $3\times 3$ grid of
favorability regimes, trainable ternary is flat---median $R^2$ between $0.984$ and
$0.996$, with no single run below $0.945$---independent of both priors and of scale. The
unary baseline instead tracks favorability exactly: unusable when targets are scattered
(median $R^2 \approx -0.02$), recovering only as the amplitude prior concentrates energy
onto its few reachable modes, and reaching parity with ternary solely when both frequencies
and amplitudes are concentrated. Even in that single most-favorable-to-unary regime the two
are statistically indistinguishable (median $0.999$ against $0.995$), with ternary holding
the higher worst-case run ($0.956$ against $0.929$). Grid initialization therefore yields consistent high $R^2$ across the full range of target regimes, whereas the unary baseline is competitive only where the target frequencies fall within its reach.

\subsection{Ablation Studies}
\label{sec:ablation}

\paragraph{Encoding base and circuit architecture.}
Figure~\ref{fig:ablation} examines two structural dimensions of
ternary initialization.
Panel~(a) varies the encoding base across $\{1,2,3,4,5\}$ on a
3-qubit circuit with target $\Omega_2$.
Only base-3 achieves reliable convergence across all 100 runs:
base-2 fails because its accessible range $[-7,7]$ does not reach
the target; base-5 fails because, despite a wider range
$[-31,31]$, the target frequency $11$ falls in one of its 36
integer gaps; base-4 succeeds on this particular target by
coincidence --- its spectrum happens to contain $\{11, 13\}$ ---
but has 16 gaps that would cause failure on any target falling
within them. 

Panel~(b) fixes the ternary base and varies the circuit
architecture.
A 2-qubit circuit with 1 FM per qubit fails because its accessible
range $[-4,4]$ does not reach the target.
A 3-qubit circuit with 1 FM per qubit succeeds as established
above.
A 4-qubit circuit with 1 FM per qubit covers the full target range
$[-40,40]$ but achieves only median $R^2 = 0.799$, confirming
that frequency coverage is necessary but not sufficient on its own.
Replacing the 4-qubit circuit with a 2-qubit circuit using 2 FMs
per qubit --- achieving the same accessible range $[-40,40]$ with
a different entanglement structure and 45 ansatz parameters ---
recovers $100\%$ success, demonstrating that the underperformance
of the 4-qubit circuit is attributable to circuit architecture
rather than frequency coverage.

Taken together, both panels confirm the central claim: successful
convergence requires that the initial prefactors place accessible
frequencies close to the target spectrum.
Panel~(b) additionally shows that once coverage is ensured,
circuit architecture has an independent effect on performance
that warrants further investigation.

\begin{figure}[t]
    \centering
    \includegraphics[width=\linewidth]
        {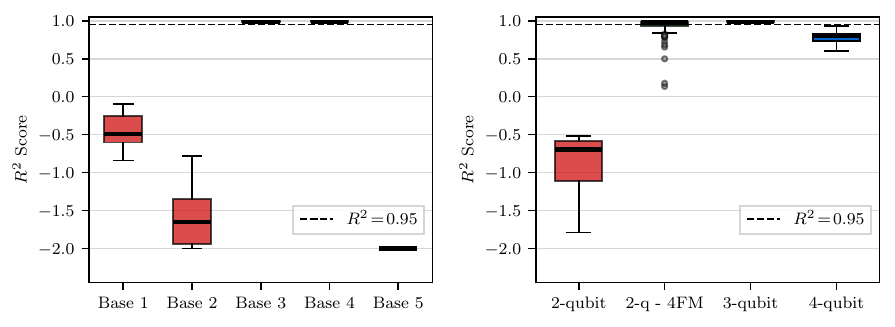}
    \caption{Ablation studies on $\Omega_2 = \{11, 11.2, 13\}$, 10 randomly chosen target coefficients with 10 different weight initialization seeds each.
\textbf{(a)} $R^2$ distributions by encoding base, 3-qubit circuit
with 1 feature map per qubit.
Base-3 (ternary) and base-4 achieve $100\%$ success; bases 1, 2,
and 5 fail completely.
\textbf{(b)} $R^2$ distributions by circuit architecture, ternary
initialization throughout.
The 3-qubit (1 FM) and 2-qubit (2 FM) circuits achieve $100\%$
success; the 4-qubit circuit is partially successful; the 2-qubit
(1 FM) circuit fails.
Dashed line marks $R^2 = 0.95$.}
    \label{fig:ablation}
\end{figure}

\subsection{Alternative Optimization Strategies}
\label{sec:baselines}

The spectral gap suppression proved in Proposition~\ref{prop:gap}
is a property of the gradient signal, leaving open whether a
gradient-free optimizer or a different initialization strategy
could navigate to the target frequencies from the standard
unary starting point.
We test both possibilities.

\paragraph{Derivative-free optimization: CMA-ES.}
We evaluate CMA-ES~\citep{hansen2016cma} on the same 3-qubit
architecture with target $\Omega_2$, using a budget of $100{,}000$
function evaluations ($20\times$ Adam's $5{,}000$ steps), step size
$\sigma_0 = 0.5$, and the same 100 runs.
CMA-ES achieves $R^2 \geq 0.95$ in $25/100$ runs
(mean $R^2 = 0.983$ among successes), compared to $0/100$
for Adam with unary initialization.
In the successful runs, prefactor values reach magnitudes up to
$\approx 13$, consistent with the target frequency, confirming that
solutions exist and the loss surface is not flat near them.
The remaining $75\%$ of runs produce variable and often negative $R^2$
values, with high IQR.
The low success rate despite $20\times$ the evaluation budget indicates
that the reachability limitation is a property of the optimization
landscape rather than of gradient-based optimization specifically:
reaching the target from the standard initialization is difficult
regardless of optimizer, and succeeds only when global search
happens to land near the target spectrum.

\paragraph{Random prefactor initialization.}
A natural alternative to ternary initialization is to sample
prefactors uniformly from a range spanning the target frequencies.
We evaluate uniform initialization from $\mathrm{Uniform}([1,9])$,
matching the range of the ternary grid $\{1, 3, 9\}$, on the same
100-run setup.
The result is strongly bimodal: $39/100$ runs achieve
$R^2 \geq 0.95$ (mean $0.951$ among successes), $50/100$
produce negative $R^2$ (mean $-0.87$), and the remaining
$11/100$ fall in $[0, 0.95)$ (mean $0.20$), reflecting runs
where a randomly drawn prefactor happens to land near but
not on the target spectrum.
The median across all runs is $R^2 \approx -0.002$ with IQR
spanning $[-1.05, +0.99]$.
Inspection of successful runs reveals that they correspond to draws
where the sampled prefactors happen to lie near the target
frequencies; failures correspond to draws that do not.
Random initialization therefore provides coverage in expectation but
not by construction: whether any given run succeeds is entirely
seed-dependent.

Table~\ref{tab:baseline_comparison} places all approaches in context.
Ternary grid initialization is the only strategy that achieves both
reliable convergence and a principled frequency coverage property.

\begin{figure}[htbp]
    \centering
    \includegraphics[width=\linewidth]
        {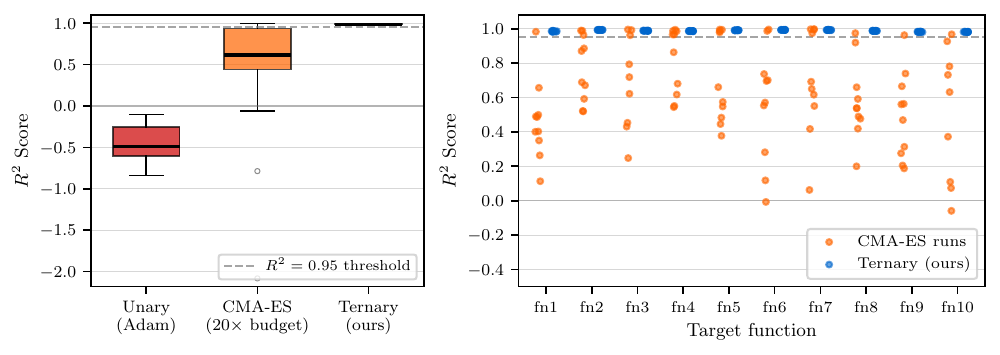}
    \caption{$R^2$ distributions across 100 runs for each
    optimization strategy on target $\Omega_2$.
    \textbf{(a)} Overall distributions; percentages indicate
    success rates ($R^2 \geq 0.95$).
    \textbf{(b)} Per-function breakdown for CMA-ES and ternary
    across all 10 target functions: CMA-ES performance varies
    widely and unpredictably; ternary achieves tight, consistent
    $R^2 \approx 0.987$--$0.993$ for every function and seed.}
    \label{fig:cmaes_analysis}
\end{figure}

\subsection{Real-World Validation: Flight Passengers}
\label{sec:flight}

We validate on the Flight Passengers time series~\citep{box1976time}
using a 4-qubit parallel architecture with one feature map per
qubit (255 parameters).

\begin{table}[t]
\centering
\caption{Median $R^2$ across real-world benchmarks.
Flight Passengers uses a 4-qubit parallel circuit (255 parameters);
California Housing uses an 8-qubit serial circuit with 6 layers of
nearest-neighbor $\mathrm{SU}(4)$ gates and measurement on the
central qubit (630 parameters).}
\label{tab:realworld}
\begin{tabular}{lcc}
\toprule
Approach & Flight Passengers & California Housing \\
\midrule
Fixed Unary        & $0.707$ & $0.662$ \\
Trainable Unary    & $0.788$ & $0.680$ \\
Fixed Ternary      & $0.906$ & $0.730$ \\
\textbf{Trainable Ternary (ours)}
                   & $\mathbf{0.967}$ & $\mathbf{0.744}$ \\
\bottomrule
\end{tabular}
\end{table}

Trainable Ternary achieves a $22.8\%$ improvement over Trainable
Unary.
A Wilcoxon rank-sum test confirms all pairwise differences are
statistically significant ($p = 0.000183$, $|r| = 1.0$ throughout),
including Fixed vs.\ Trainable Ternary.
The smaller advantage relative to the synthetic benchmark is
expected: the synthetic experiment deliberately excluded
frequencies below $10$ to isolate long-range reachability,
whereas real-world data contains spectral content across all
frequencies including those already accessible under unary
initialization.

\subsection{Multivariate Validation: California Housing}
\label{sec:california}

To confirm the approach extends to multivariate inputs, we evaluate
on the California Housing dataset~\citep{pace1997sparse} with 8 input
features, using an 8-qubit serial architecture with 5 encoding gates
per qubit across 10 runs.

\begin{figure}[t]
    \centering
    \begin{subfigure}[b]{0.48\linewidth}
        \includegraphics[width=\linewidth]
            {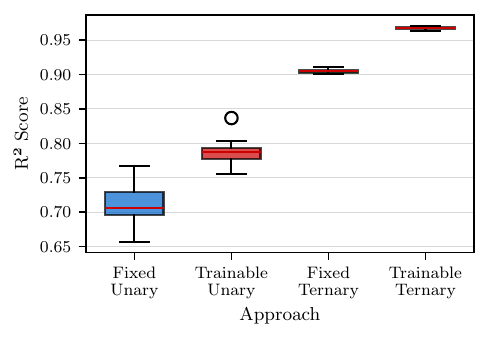}
        \caption{Flight Passengers (4-qubit parallel, 1 feature map per qubit).}
        \label{fig:r2s_flightPass}
    \end{subfigure}
    \hfill
    \begin{subfigure}[b]{0.48\linewidth}
        \includegraphics[width=\linewidth]
            {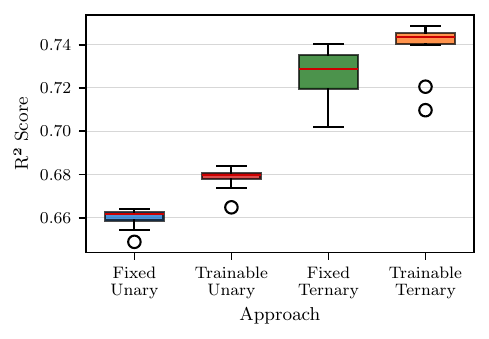}
        \caption{California Housing (8-qubit serial, 5 feature maps
        per qubit).}
        \label{fig:r2s_califHousing}
    \end{subfigure}
    \caption{$R^2$ distributions across 10 initializations on the
    two real-world benchmarks.
    The same performance staircase
    (Fixed Unary $<$ Trainable Unary $<$ Fixed Ternary $<$ Trainable
    Ternary) appears on both datasets.}
    \label{fig:r2s_realworld}
\end{figure}

The four approaches produce the same performance staircase again
(Figure~\ref{fig:r2s_califHousing}).
Fixed Unary achieves median $R^2 = 0.662$ with a tight IQR
$[0.659, 0.663]$: the deterministic integer-frequency spectrum
gives every run near-identical coverage, so variance is minimal
but the ceiling is low.
Trainable Unary improves modestly to $0.680$ through local prefactor
adjustment within the reachable range.
Fixed Ternary reaches $0.730$ with substantially wider variance
(IQR $0.700$--$0.740$), reflecting sensitivity to non-integer target
frequencies that fixed prefactors cannot adapt to.
Trainable Ternary achieves both the highest median ($0.744$) and the
tightest IQR among the ternary approaches ($0.742$--$0.745$).
A Wilcoxon rank-sum test confirms that Trainable Ternary significantly
outperforms Trainable Unary ($p = 0.000183$, $|r| = 1.0$).
The Fixed vs.\ Trainable Ternary improvement is weaker here
($p = 0.014$, $|r| = 0.66$) than on Flight Passengers.
California Housing is a high-dimensional regression task with
complex spatial structure; the quantum model's frequency coverage,
while improved by ternary initialization, remains limited relative
to the full complexity of the target.
The marginal benefit of prefactor fine-tuning is therefore smaller
here than on Flight Passengers, where the dominant spectral
structure is more directly addressable by the ternary grid.

\section{Discussion and Limitations}
\label{sec:discussion}

\paragraph{A consistent staircase.}
Across all three datasets, the same performance ordering holds:
Fixed Unary $<$ Trainable Unary $<$ Fixed Ternary $<$ Trainable
Ternary.
This staircase is not coincidental: each step corresponds to a
distinct property identified by the theoretical analysis.
The gap between Fixed and Trainable Unary reflects local prefactor
fine-tuning within the narrow reachable range.
The gap between Trainable Unary and Fixed Ternary reflects spectral
gap suppression: even local fine-tuning cannot overcome a large
$\delta(\alpha, \Omega_h)$, while dense integer coverage eliminates
it at initialization.
The gap between Fixed and Trainable Ternary reflects the adaptability
advantage of learned prefactors over fixed ones in the presence of
non-integer target frequencies.

\paragraph{The role of prefactor trainability.}
The comparison between Fixed and Trainable Ternary reveals a
consistent pattern: trainability reduces variance and improves the
median by using the integer grid points as initialization and
fine-tuning within the locally reachable neighborhood to track
actual target frequencies.
In the limit of an infinitely dense fixed grid, the two approaches
would converge in performance; in practice, with
$k = \lceil\log_3(2\omega_{\max}+1)\rceil$ encoding gates, the
trainable variant achieves better adaptability at no additional
circuit cost.

\paragraph{Encoding efficiency and prefactor coupling.}
One consequence of ternary encoding efficiency is that a small
number of prefactors governs a large accessible spectrum: with
$k = \lceil\log_3(2\omega_{\max}+1)\rceil$ encoding gates, each
prefactor simultaneously influences multiple frequencies.
When target frequencies require movement in conflicting directions,
a single prefactor cannot satisfy all constraints simultaneously
--- a coupling that does not arise in unary TF circuits to the
same degree and which may limit fine-tuning on targets with
complex spectral structure.

\paragraph{Gradient descent assumption.}
The theoretical results in Section~\ref{sec:theory} are stated for
vanilla gradient descent.
All experiments use Adam, whose second-moment normalization alters
the effective step size in ways not captured by
Corollary~\ref{cor:displacement}.
Empirically, observed displacements are consistent with the
target-driven component of the bound, suggesting the self-interaction
term is substantially smaller than its worst-case value in the
regimes tested.
A formal extension of the displacement bound to adaptive optimizers
would close this gap between theory and practice.

\paragraph{Continuous loss approximation.}
The theoretical results are stated for the continuous
$L^2[-\pi,\pi]$ loss, whereas experiments minimize empirical
mean squared error over a finite sample of $100$ equally spaced
points.
For sufficiently dense, regularly spaced grids the two losses
are close by standard quadrature arguments, but the precise
gap between continuous and empirical guarantees is not
characterized here and may matter for sparser or irregularly
sampled data.

\paragraph{Tightening the displacement bound.}
Corollary~\ref{cor:displacement} provides a rigorous upper bound on
prefactor displacement; the target-driven component evaluates to
$\approx 2.6$ units at standard experimental parameters, consistent
with the empirically observed limit, but the self-interaction term
yields a highly conservative bound that does not reflect observed
behavior.
A tighter characterization that tracks the interaction between
$T_{\mathrm{self}}$ and $T_{\mathrm{tgt}}$ over the course of
training would sharpen the quantitative predictions of the theory,
particularly for adaptive optimizers such as Adam.

\paragraph{Prefactor coupling and degrees of freedom.}
Ternary encoding achieves its gate efficiency precisely because each
prefactor governs multiple frequencies simultaneously: $k$ prefactors
generate up to $(3^k-1)/2$ accessible frequencies.
For targets whose spectral structure aligns well with the ternary
grid this constraint is benign; for targets requiring fine independent
control of many frequencies simultaneously, the coupling between
prefactors and accessible frequencies may limit the expressivity of
the trained model relative to a unary TF circuit with more prefactor
degrees of freedom.
Characterizing which target structures are most affected, and whether
hybrid encoding strategies can mitigate the coupling, is an open
direction. Appendix~\ref{app:random-frequencies} provides evidence on this
point: across a grid of frequency- and amplitude-prior regimes --- from
targets scattered across the full accessible spectrum to targets
concentrated at its low end --- trainable ternary retains median
$R^2 \geq 0.98$, with no single run below $0.945$, up to
$\omega_{\max} = 364$, indicating that the coupling does not become
binding at these target sparsities ($T=3$).

\paragraph{Encoding architecture and ansatz expressivity.}
The ablation in Section~\ref{sec:ablation} shows that two circuits
with identical accessible frequency ranges but different
entanglement structures --- a 4-qubit circuit with 1 FM
per qubit and a 2-qubit circuit with 2 FMs per qubit ---
achieve substantially different performance.
This demonstrates that frequency coverage alone does not determine
convergence: circuit architecture has an independent effect whose
mechanism is not yet understood.
Identifying which architectural properties govern performance
beyond frequency coverage is left for future work.

\paragraph{Specification of $\omega_{\max}$.}
The ternary grid initialization requires specifying $\omega_{\max}$
in advance, which determines both the number of encoding gates and
the initial prefactor values.
In the absence of prior knowledge, a conservative over-estimate
incurs only logarithmic overhead in encoding gates; a spectral
pre-analysis of the target values via one-dimensional FFT slices
per feature provides a more targeted estimate, though this may
underestimate the full multivariate spectral content
(Section~\ref{sec:ternary}).
A fully automated method for inferring the required frequency range
directly from training data would make the approach more broadly
applicable.

\paragraph{Scalable ansatze.}
The 3-qubit and 4-qubit experiments use full SpecialUnitary blocks,
which scale exponentially in parameter count and become impractical
beyond small system sizes.
The California Housing experiment already addresses this with a
nearest-neighbor brick-wall arrangement of two-qubit
$\mathrm{SU}(4)$ blocks~\citep{wierichs2025unitarysynthesisoptimalbrick}.
Whether the expressivity reduction that accompanies such restricted,
local ansatze interacts with the frequency reachability
properties established here --- specifically, whether a restricted
ansatz can independently control all Fourier coefficients accessible
via ternary initialization --- remains an open question for
near-term experimental deployment.

\paragraph{Classical--quantum expressivity boundary.}
Trainable-frequency models incorporate learnable prefactors $\alpha_j$
that multiply the input data before quantum encoding, constituting
a form of classical pre-processing.
As \citet{yu2022power} acknowledge, this introduces ambiguity about
whether expressivity improvements originate from the classical
pre-processing or the quantum circuit itself.
The ternary initialization strategy resolves a problem within the
trainable-frequency framework as defined, but a precise
characterization of which aspects of the modeling advantage require
quantum resources and which are reproducible classically would
substantially clarify the framework's scope.

\paragraph{Curse of dimensionality.}
For $d$-dimensional inputs, the number of accessible frequencies
grows exponentially with $d$, and the parameter count required to
control all Fourier coefficients scales
accordingly~\citep{perez_Salinas_2025}.
The California Housing experiment with 8 input features already
requires a large ansatz, and the approach faces the same exponential
scaling as classical Fourier methods in high dimensions.
One direction toward mitigating this is selective frequency
activation --- learning which subset of accessible frequencies
to use rather than controlling all of them --- which reduces
the effective dimensionality of the optimization problem
and is explored in complementary ongoing work.

\section{Conclusion}
\label{sec:conclusion}

Frequency reachability is a structural barrier in trainable-frequency
quantum circuits: the prefactor gradient is suppressed by the spectral
gap between the circuit's accessible frequencies and the target
spectrum, independently of the ansatz, confining gradient-driven
movement to a narrow neighborhood of initialization.
Ternary grid initialization resolves this by construction, ensuring
every target frequency within range lies within $\tfrac{1}{2}$ unit
of a grid point and eliminating the gradient penalty simultaneously.
The result is a principled initialization strategy that converts the
theoretical encoding efficiency of trainable-frequency circuits into
reliable empirical performance, confirmed by a consistent performance
staircase across synthetic and real-world benchmarks.

The Fourier-analytic decomposition of the prefactor gradient into
target-driven and self-interaction components provides a reusable
analytical template for studying trainability in other parameterized
encoding architectures.
Open directions include extending the displacement bounds to adaptive
optimizers, characterizing the prefactor coupling constraints that
arise from ternary encoding efficiency, and addressing the exponential
growth of the accessible frequency space in high-dimensional settings
--- the last of which selective frequency activation approaches may
partially address.

\bibliographystyle{tmlr}
\bibliography{references}

\section*{Reproducibility Statement}

\paragraph{Data and targets.}
Synthetic target functions are truncated Fourier series with randomly
sampled coefficients ($a_i, b_i, c_0 \sim \mathrm{Uniform}([0,1))$)
at specified frequency spectra, evaluated at 100 equally spaced points
over $[-\pi,\pi]$.
Target outputs are scaled to $[-1,1]$ via MinMaxScaler.
The Flight Passengers dataset~\citep{box1976time} is publicly
available (144 monthly observations); it is split sequentially
80/20 into train and test sets to respect temporal ordering,
with values normalized to $[-1,1]$ via MinMaxScaler.
The California Housing dataset~\citep{pace1997sparse} is available
via \texttt{scikit-learn}~\citep{pedregosa2011scikit}, using an
80/20 random train/test split with the same output normalization.

\paragraph{Implementation.}
All experiments use PennyLane 0.42.0 with JAX/JIT compilation
(JAX 0.5.0) and the Optax Adam optimizer at learning rate $0.001$
for $5{,}000$ steps, with gradients computed on minibatches of $40$
samples.
Mean squared error is the optimization objective; $R^2$ on an 80/20
train/test split is the evaluation metric.
All training was performed on an Apple M2 MacBook Pro (16\,GB RAM).

\paragraph{Ansatz initialization.}
All ansatz parameters are initialized by drawing uniformly from
$[0, 2\pi]$ throughout, to isolate the effect of prefactor
initialization from ansatz initialization effects.

\paragraph{Reporting.}
For synthetic benchmarks, results are reported as median and IQR
across 10 random weight initializations per target function,
with 10 target functions per configuration (100 runs total per
approach).
For real-world datasets, results are reported as median and IQR
across 10 random weight initializations.

\paragraph{CMA-ES.}
CMA-ES experiments use the \texttt{cma} Python package with step size
$\sigma_0 = 0.5$ and a budget of $100{,}000$ function evaluations.
Both ansatz weights and frequency prefactors are jointly optimized.

\paragraph{Circuit architectures.}
The 3-qubit synthetic circuit uses parallel $R_x(\alpha_i \cdot x)$
encoding on all qubits, interleaved with full 3-qubit SpecialUnitary
ansatz blocks (63 parameters each);
see Figure~\ref{fig:su3Q} in Appendix~\ref{app:supporting-figures}.
The 4-qubit Flight Passengers circuit uses the same parallel
architecture with one feature map per qubit and full 4-qubit
SpecialUnitary ansatz blocks (255 parameters each).
The 8-qubit California Housing circuit uses one feature map per
qubit with a nearest-neighbor brick-wall ansatz of 6 layers of 7
two-qubit $\mathrm{SU}(4)$ blocks and measurement on qubit 3 (630 parameters).

\paragraph{Random-frequency sweep.}
The experiment of Appendix~\ref{app:random-frequencies} shares the
training protocol above --- Adam at $\eta = 0.001$ for $5{,}000$ steps,
mean squared error, $80/20$ split, $R^2$ summarized as median and IQR
--- and differs only in that the sample count scales with the accessible
bandwidth as $N = \max(512,\, 4(\omega_{\max}+1))$ and that each of the
$10$ runs per configuration pairs one randomly drawn target with one
random ansatz initialization.
Target construction, including the occupancy prior
$p(\omega)\propto\omega^{-\beta}$ on the frequencies and the spectrum
prior (component power $\propto\omega^{-\gamma}$) on the amplitudes, is
detailed in Appendix~\ref{app:random-frequencies}; ternary and unary arms share
identical targets, splits and ansatz initializations at each seed, so the
two are compared run for run.

\paragraph{Code.}
All experiment code and plotting notebooks are provided as
supplementary material.

\appendix

\section{Supporting Experimental Figures}
\label{app:supporting-figures}

The following figures support the robustness claims in
Section~\ref{sec:emp-reachability}.
Figure~\ref{fig:prefactor_evolution_undistorted} shows prefactor
evolution under identical initialization $\{1.0, 1.0, 1.0\}$,
confirming that lock-step symmetry does not prevent failure.
Figure~\ref{fig:r2s_distorted_undistortedAlphas} confirms that the
failure mode is insensitive to whether prefactors are initialized
identically or with a small perturbation.

\begin{figure*}[h]
    \centering
    \includegraphics[width=0.7\linewidth]
        {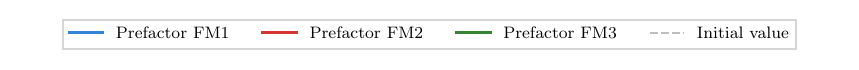}\\[0.1cm]
    \begin{subfigure}[b]{0.32\textwidth}
        \includegraphics[width=\textwidth]
            {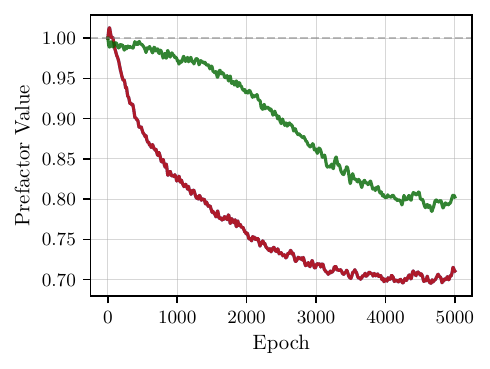}
        \caption{$\eta = 0.001$}
    \end{subfigure}
    \hfill
    \begin{subfigure}[b]{0.32\textwidth}
        \includegraphics[width=\textwidth]
            {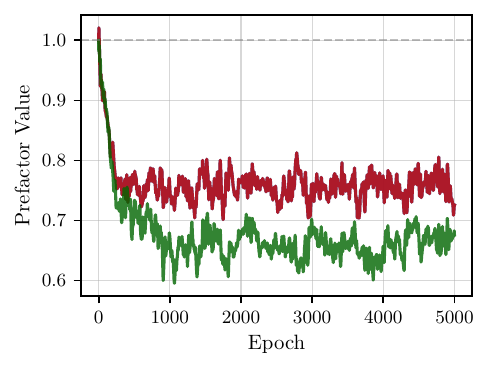}
        \caption{$\eta = 0.01$}
    \end{subfigure}
    \hfill
    \begin{subfigure}[b]{0.32\textwidth}
        \includegraphics[width=\textwidth]
            {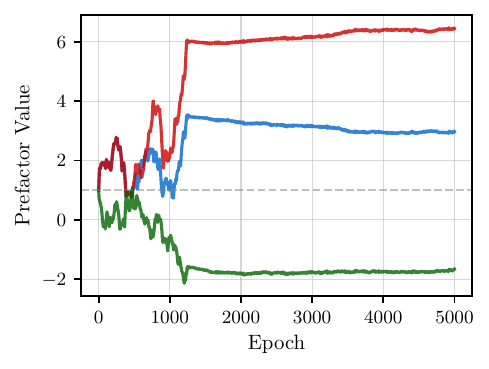}
        \caption{$\eta = 0.1$}
    \end{subfigure}
    \caption{Prefactor evolution under identical initialization
    $\{1.0, 1.0, 1.0\}$ for three learning rates. At $\eta \leq 0.01$
    prefactors evolve lock-step; at $\eta = 0.1$ symmetry breaks but
    most prefactors still fail to reach $\Omega_2$.}
    \label{fig:prefactor_evolution_undistorted}
\end{figure*}

\begin{figure}[h]
    \centering
    \begin{subfigure}[b]{0.48\linewidth}
        \includegraphics[width=\linewidth]
            {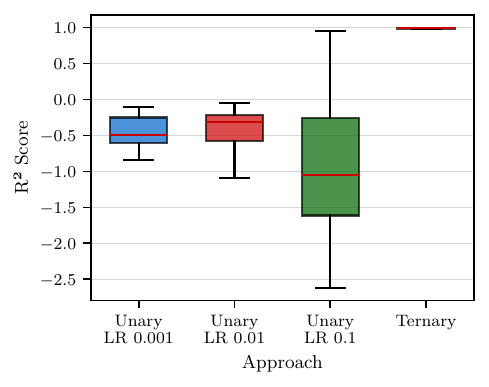}
        \caption{Identical init $\{1.0, 1.0, 1.0\}$.}
        \label{fig:r2s_jad+10_undist}
    \end{subfigure}
    \hfill
    \begin{subfigure}[b]{0.48\linewidth}
        \includegraphics[width=\linewidth]
            {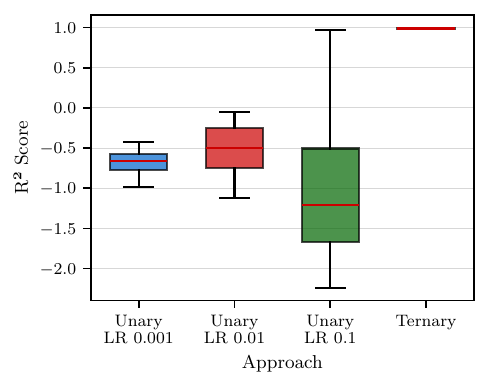}
        \caption{Perturbed init $\{1.01, 1.02, 1.03\}$.}
        \label{fig:r2s_jad+10_distorted}
    \end{subfigure}
    \caption{$R^2$ distributions across 100 runs for target
    $\Omega_2 = \{11, 11.2, 13\}$, comparing identical and perturbed
    unary initializations against trainable ternary ($\eta = 0.001$).
    Both unary strategies produce similar failure distributions despite
    different prefactor evolution patterns.}
    \label{fig:r2s_distorted_undistortedAlphas}
\end{figure}

Figure~\ref{fig:prefactor_evolution} shows prefactor trajectories
for the perturbed initialization $\{1.01, 1.02, 1.03\}$ across
three learning rates, confirming that only $\eta = 0.1$ enables
substantial movement but without reliable convergence.

\begin{figure*}[htbp]
    \centering
    \includegraphics[width=0.7\linewidth]
        {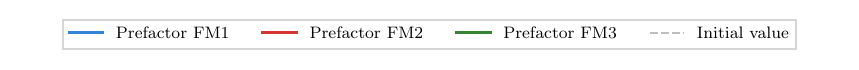}\\[0.1cm]
    \begin{subfigure}[b]{0.32\textwidth}
        \includegraphics[width=\textwidth]
            {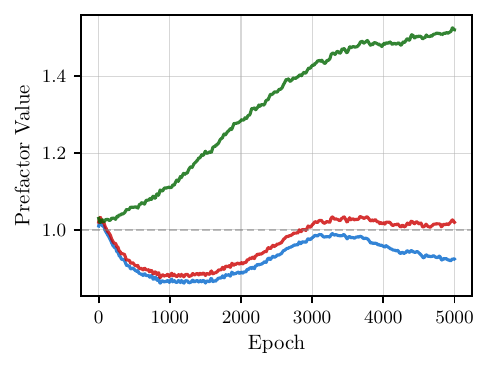}
        \caption{$\eta = 0.001$}
    \end{subfigure}
    \hfill
    \begin{subfigure}[b]{0.32\textwidth}
        \includegraphics[width=\textwidth]
            {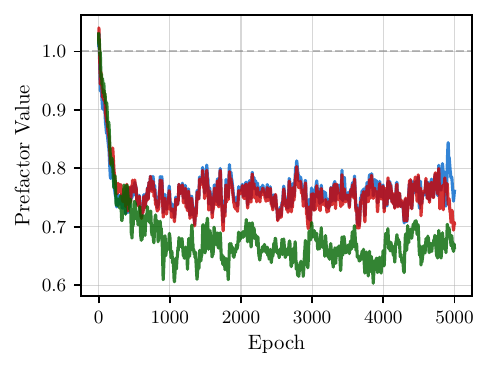}
        \caption{$\eta = 0.01$}
    \end{subfigure}
    \hfill
    \begin{subfigure}[b]{0.32\textwidth}
        \includegraphics[width=\textwidth]
            {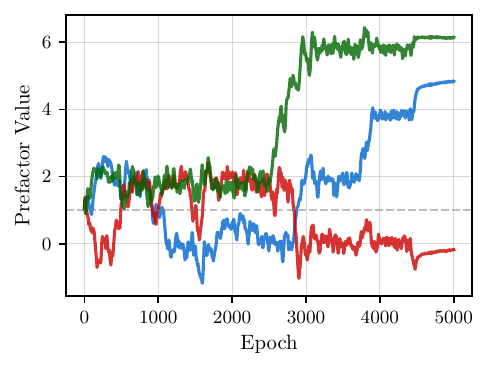}
        \caption{$\eta = 0.1$}
    \end{subfigure}
    \caption{Prefactor evolution for unary initialization on
    $\Omega_2 = \{11, 11.2, 13\}$, showing the best-performing run
    at each learning rate.
    Only $\eta = 0.1$ enables substantial movement, but convergence
    remains unreliable at that rate
    (Figure~\ref{fig:r2s_distorted_undistortedAlphas}).}
    \label{fig:prefactor_evolution}
\end{figure*}

\begin{figure}[h]
    \centering
    \scalebox{0.7}{
    \begin{quantikz}
        \lstick{$\ket{0}$}
          & \gate[wires=3]{SU(\boldsymbol{\theta_0})}
            \gategroup[3,steps=2,style={dashed,rounded corners,
              inner xsep=2pt,inner ysep=10pt,yshift=-5pt},
              background,label style={label position=above,
              anchor=north,yshift=0.4cm}]{{\sc Ansatz 0}}
          & \qw
          & \gate{Rx(\alpha_1 x)}
            \gategroup[3,steps=1,style={dashed,rounded corners,
              inner xsep=2pt,inner ysep=10pt,yshift=-5pt},
              background,label style={label position=above,
              anchor=north,yshift=0.4cm}]{{\sc FM}}
          & \gate[wires=3]{SU(\boldsymbol{\theta_1})}
            \gategroup[3,steps=2,style={dashed,rounded corners,
              inner xsep=2pt,inner ysep=10pt,yshift=-5pt},
              background,label style={label position=above,
              anchor=north,yshift=0.4cm}]{{\sc Ansatz 1}}
          & \qw & \qw \\
        \lstick{$\ket{0}$}
          & \qw & \qw
          & \gate{Rx(\alpha_2 x)}
          & \qw & \qw & \qw \\
        \lstick{$\ket{0}$}
          & \qw & \qw
          & \gate{Rx(\alpha_3 x)}
          & \qw & \qw & \meter{}
    \end{quantikz}
    }
    \caption{3-qubit circuit with parallel $R_x(\alpha_i x)$ encoding
    and SpecialUnitary ansatz blocks (63 trainable parameters each).}
    \label{fig:su3Q}
\end{figure}

\section{Random-Frequency Stress Test Across Priors and Spectral Scale}
\label{app:random-frequencies}

The synthetic benchmark of Section~\ref{sec:synthetic} evaluates a single, tightly
grouped target spectrum $\Omega_2 = \{11, 11.2, 13\}$ on a $3$-qubit circuit. Three
properties of that benchmark could each, on their own, explain the reported advantage:
the target is fixed rather than random, it lives on a single small circuit, and its
frequencies are concentrated in a narrow band. This appendix removes all three at once.
We draw targets \emph{at random}, sweep the number of encoding gates
$k \in \{3,4,5,6\}$ so that the accessible ternary spectrum grows as
$\omega_{\max} = (3^k-1)/2 \in \{13, 40, 121, 364\}$, and---this is the addition
prompted by the concern that the original target may be unusually favorable---sweep two
priors that control how favorable each drawn target is: a \emph{frequency prior} $\beta$
governing where the target frequencies sit, and an \emph{amplitude prior} $\gamma$
governing how their spectral energy is distributed. Both priors take values in
$\{0,1,2\}$ and are crossed with the $k$-sweep, giving a $3\times 3$ grid of favorability
regimes at each scale.

Scattered, high frequencies are the \emph{adversarial} case for ternary encoding: they
place conflicting demands on the shared prefactors, the coupling limitation discussed in
Section~\ref{sec:discussion}. Concentrated, low-energy-spread targets are conversely the
regime for which the fixed unary baseline was designed. The grid therefore spans the
range from the setting least favorable to our proposal to the setting most favorable to
the baseline, and reports the outcome distribution in every cell rather than a single
configuration.

\paragraph{Setup.}
The training protocol follows Section~\ref{sec:experiments} except where noted: Optax
Adam at $\eta = 0.001$ for $5{,}000$ steps on a mean-squared-error objective in
minibatches of $40$, inputs normalized to $[-\pi,\pi]$ and outputs to $[-1,1]$ via
MinMaxScaler, an $80/20$ random train/test split, and $R^2$ reported on the held-out
set. Circuits use $k$ qubits with one $R_x(\alpha_j x)$ feature map per qubit, matching
the parallel architecture of Figure~\ref{fig:su3Q}. Ansatz blocks are full
$\mathrm{SU}(2^k)$ throughout ($4^k - 1$ parameters, i.e.\ $63$, $255$, $1023$ and $4095$
at $k = 3,4,5,6$), chosen so that limited ansatz expressivity cannot confound the
encoding claim. The sample count follows the accessible bandwidth,
$N = \max(512,\, 4(\omega_{\max}+1))$, giving at least four samples per period of the
highest accessible frequency ($N = 512$ for $k \le 5$; $N = 1460$ for $k=6$). Because
each run draws its own random target, every cell is summarized over $10$ runs---one
random target paired with one random ansatz initialization each.

\paragraph{Target construction.}
Each target is a real trigonometric polynomial with $T=3$ components,
\begin{equation}
  h(x) \;=\; \sum_{i=1}^{T}\bigl[\, a_i \cos(\omega_i x) + b_i \sin(\omega_i x) \,\bigr],
  \label{eq:random-target}
\end{equation}
generated as follows. The integer parts $n_1,\dots,n_T$ of the frequencies are drawn
\emph{without replacement} from $\{1,\dots,\omega_{\max}\}$ under the \emph{occupancy prior}
$p(n) \propto n^{-\beta}$, normalized over the accessible range. At $\beta=0$ this is the
uniform draw; increasing $\beta$ concentrates the occupied frequencies toward the low end
(the median drawn integer at $k=6$ falls from $\approx 29$ at $\beta=0$ to $\approx 8$ at
$\beta=1$ and $\approx 2$ at $\beta=2$). Each frequency is displaced off the integer grid by
an independent offset, $\omega_i = n_i + u_i$ with
$u_i \sim \mathrm{Uniform}(-\tfrac12,\tfrac12)$, so that no target sits exactly on a ternary
grid point. The coefficients are drawn as $a_i, b_i \sim \mathrm{Uniform}(0,1)$ and then
rescaled by $\omega_i^{-\gamma/2}$ under the \emph{spectrum prior} $\gamma$, so that each
component's power $a_i^2 + b_i^2$ has expectation proportional to $\omega_i^{-\gamma}$: at
$\gamma=0$ the energy is uniform across components, while $\gamma=1$ and $\gamma=2$ impose
pink ($1/f$) and brown ($1/f^2$) power spectra that concentrate the target's energy onto its
lowest-frequency components. The signal is evaluated on $N$ points uniform in $[-\pi,\pi]$
and min--max normalized to $[-1,1]$. The corner $(\beta,\gamma)=(0,0)$ reproduces the
uniform-target protocol used elsewhere in the paper bit-for-bit.

Both encodings are compared at \emph{matched cost}: each uses $k$ trainable prefactors,
initialized to the ternary grid $\alpha^{(0)} = (1, 3, \dots, 3^{k-1})$ for the proposed
method and to $\alpha^{(0)} = (1, \dots, 1)$ for the unary baseline, and each is trained
jointly with its ansatz. The encodings differ only in the accessible spectrum they reach:
$\omega_{\max} = (3^k-1)/2$ for ternary against $\omega_{\max} = k$ for unary.

\paragraph{Coverage and reachable variance.}
For every run we log spectral \emph{coverage}---whether \emph{all} $T$ target frequencies
fall within the encoding's accessible reach, a displaced frequency counting as reachable when
its nearest grid point is, i.e.\ $|\omega_i| \le \mathrm{reach} + \tfrac12$. Coverage depends
only on the occupancy prior and the scale, not on $\gamma$: it is $1.0$ for ternary in every
cell (targets are drawn from ternary's range by construction), and for unary (reach $=k$) it
is $0.0$ throughout the scattered column, rises only occasionally in the intermediate column,
and reaches at most $0.6$ in the concentrated column. Coverage is an all-or-nothing criterion;
the graded quantity that governs a reach-limited model's accuracy is the
\emph{reachable-variance fraction}
\begin{equation}
  \rho(\mathrm{reach}) \;=\;
  \frac{\sum_{i:\,|\omega_i|\le \mathrm{reach}+\frac12}\,(a_i^2 + b_i^2)}
       {\sum_{i=1}^{T}\,(a_i^2 + b_i^2)},
  \label{eq:reach-frac}
\end{equation}
the fraction of target variance carried by components the model can represent. Since distinct
Fourier components are near-orthogonal on the sampling grid, a reach-limited model attains
$R^2 \approx \rho$; across the $360$ unary runs of this experiment $\rho$ and the floored
held-out $R^2$ correlate at $r = 0.94$ with a median absolute deviation of $0.002$, confirming
the identity empirically. Any degradation for ternary (for which $\rho = 1$ in every cell) is
therefore attributable to optimization rather than reachability, whereas unary's failures are
reachability failures by construction.

\paragraph{Ternary is universal across priors and scale.}
Figure~\ref{fig:random_freq_grid} reports the full grid. Trainable ternary is essentially
flat across all nine favorability regimes and across the entire $k$-sweep: its median
$R^2$ stays between $0.984$ and $0.996$ in every cell, and the lowest single ternary run
anywhere in the grid---across all priors, scales and seeds---is $R^2 = 0.945$, marginally
below the $0.95$ threshold used elsewhere in the paper. Within the adversarial corner
($\beta=0,\gamma=0$) the median holds at $0.988$, $0.995$, $0.982$ and $0.978$ as
$\omega_{\max}$ grows from $13$ to $364$, a factor of $28$. Because ternary coverage is
$1.0$ everywhere, this residual variation reflects optimization difficulty rather than any
loss of frequency reachability. Ternary's performance is thus invariant to \emph{both}
where the target frequencies sit and how their energy is distributed.

\paragraph{Unary tracks target favorability.}
The unary baseline, by contrast, is entirely contingent on the target. In the scattered
column ($\beta=0$) it is unusable at every $\gamma$, with median $R^2 \approx -0.02$---at
or below the $R^2 = 0$ of a constant-mean predictor---because no draw places all three
frequencies within its reach of $k$. In the intermediate column ($\beta=1$) it recovers
only as the amplitude prior concentrates energy onto its few reachable modes: the median
climbs from $0.34$ at $\gamma=0$ to $0.84$ at $\gamma=1$ and $0.98$ at $\gamma=2$. In the
concentrated column ($\beta=2$) the target frequencies are low enough to fall within
reach, and the unary median rises to $\approx 1.0$. This favorability dependence is the
mechanism made visible: by Eq.~\eqref{eq:reach-frac} a reach-limited model recovers
approximately the reachable-variance fraction $\rho$ of its target, so unary succeeds
exactly when a random draw happens to place its dominant energy within reach and fails
otherwise. (The single elevated unary run in the adversarial corner, at $k=4$, is one such
draw: its frequencies $\{2.59, 3.67, 31.54\}$ give $\rho = 0.889$, placing all but a tenth
of the target variance on the two reachable components, and it attains $R^2 = 0.880$
accordingly---still below every ternary run.)

\paragraph{The one regime that favors unary---and why it still does not.}
Unary's high median in the concentrated column hides a heavy downside tail. With uniform
energy ($\beta=2,\gamma=0$) its median is $1.000$ but its worst run collapses to
$R^2 = 0.231$ and its lower quartile sits at $0.78$: the $40$--$60\%$ of draws with an
out-of-reach component crater whenever that component carries appreciable variance. The
tail closes only as $\gamma$ concentrates energy onto the reachable modes, lifting the
worst run to $0.667$ at $\gamma=1$ and $0.929$ at $\gamma=2$. The doubly-concentrated cell
$(\beta=2,\gamma=2)$---low frequencies \emph{and} concentrated amplitudes---is therefore
the sole regime in which unary is reliably competitive, high in both median and floor. Even
there the two methods are statistically indistinguishable: ternary and unary post medians
of $0.995$ and $0.999$ with overlapping interquartile ranges, and ternary in fact holds the
higher worst-case run ($0.956$ against unary's $0.929$). Ternary thus concedes nothing
decisive even on the baseline's home ground, while retaining a decisive advantage
everywhere else---which is the sense in which grid initialization buys universality rather
than a benchmark-specific win.

\begin{figure}[htbp]
    \centering
    \includegraphics[width=\linewidth]{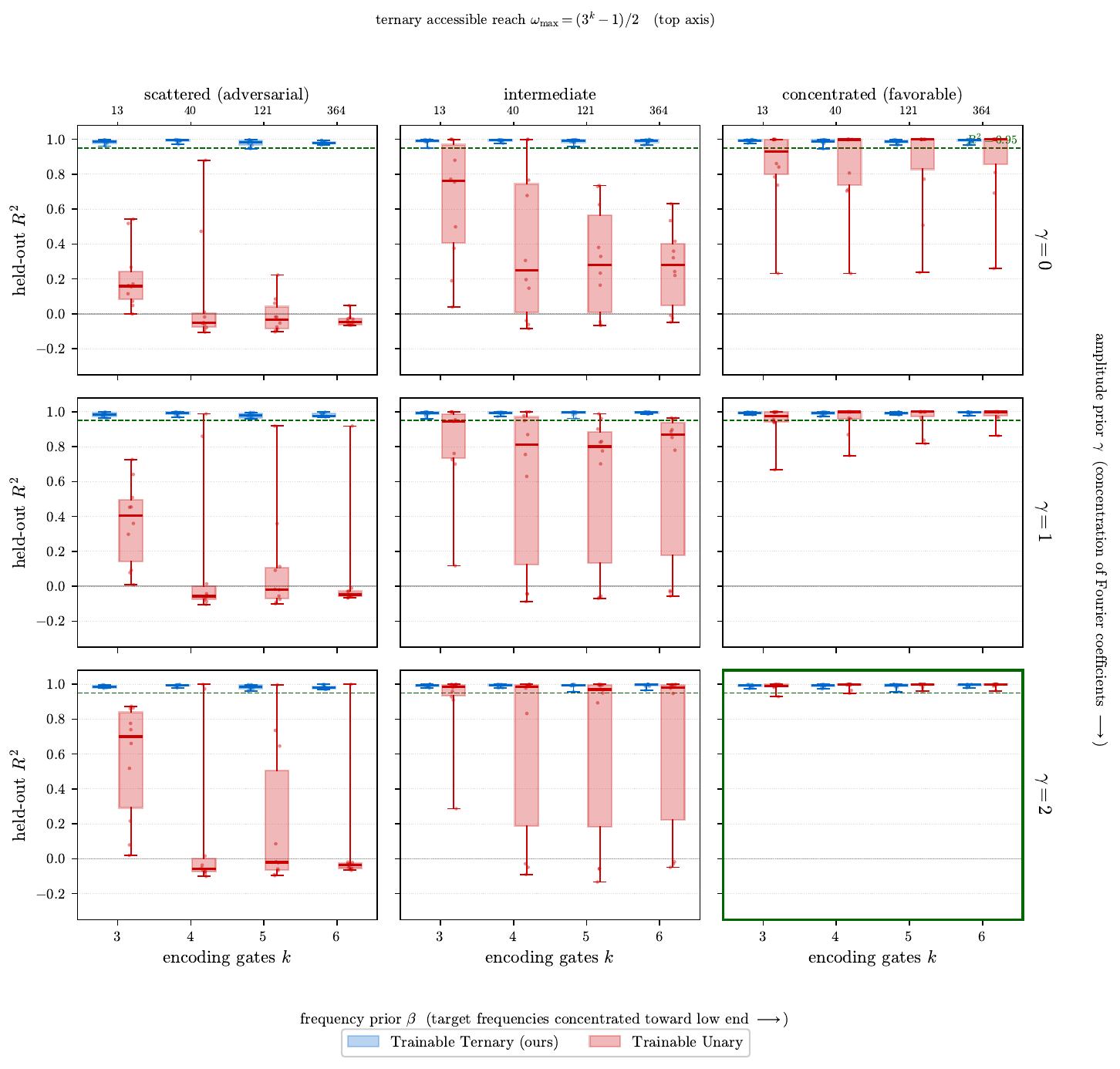}
    \caption{Held-out $R^2$ on randomly drawn targets across the favorability grid, with
    full $\mathrm{SU}(2^k)$ ansatz blocks. Columns vary the frequency prior $\beta$
    (target frequencies scattered across the accessible range at $\beta=0$, concentrated
    toward the low end at $\beta=2$); rows vary the amplitude prior $\gamma$ (uniform
    Fourier energy at $\gamma=0$, concentrated onto dominant components at $\gamma=2$).
    Within each panel the number of encoding gates $k$ increases left to right; the top
    axis of the first row gives the corresponding ternary accessible reach
    $\omega_{\max}=(3^k-1)/2$. Boxes span the interquartile range, whiskers the minimum
    and maximum over the $10$ runs, and the centre line the median. Both encodings use $k$
    trainable prefactors (matched cost) and share identical targets, splits and ansatz
    initializations at each seed; unary's reach is $\omega_{\max}=k$ against ternary's
    $(3^k-1)/2$. The dashed line marks $R^2 = 0.95$ and the grey line $R^2 = 0$, the value
    of a constant-mean predictor. Trainable ternary (blue) is flat across every prior and
    scale, with no run below $R^2 = 0.945$; trainable unary (red) tracks target
    favorability, rising from unusable in the scattered column to competitive only in the
    concentrated column, and reliably so only in the doubly-concentrated cell
    $(\beta=2,\gamma=2)$ (outlined), where the two methods are statistically
    indistinguishable and ternary holds the higher worst-case run.}
    \label{fig:random_freq_grid}
\end{figure}

\section{Proof of Ternary Encoding Gate Complexity}
\label{app:gate-complexity}

We prove Corollary~\ref{cor:gate-complexity}, extending Corollary~2
of \citet{perez_Salinas_2025} to the ternary encoding setting.
Throughout this appendix we use the normalized domain $x\in[0,1]$
and Fourier bases $e^{in\pi x}$ for consistency with the cited proof;
all results transfer to $x\in[-\pi,\pi]$ by rescaling.

\subsection{Circuit Architecture}

\begin{definition}[Ternary Encoding Circuit]
A ternary encoding circuit with $k$ encoding gates follows the
WSW (Weight-Signal-Weight) architecture:
\begin{equation}
  U^{\mathrm{ternary}}_k(\theta,x)
  = W_k(\theta_k)\cdot S_{k-1}(x)\cdot W_{k-1}(\theta_{k-1})
    \cdots S_1(x)\cdot W_1(\theta_1)\cdot S_0(x)\cdot W_0(\theta_0),
\end{equation}
where $S_j(x) = e^{i\cdot 3^j\pi x\sigma_z}$ are fixed encoding
gates with exponentially-spaced prefactors, $W_j(\theta_j)$ are
parametrized ansatz blocks, and $x\in[0,1]$.
\end{definition}

By \citet{Shin_2023, Peters_2023}, $k$ ternary gates provide access
to all integer frequencies in
$\Omega_k = \{n\in\mathbb{Z}:|n|\leq M\}$ where $M = (3^k-1)/2$.

\subsection{Mathematical Background}

\begin{theorem}[Fej\'{e}r's Theorem, \citealt{Turan1970BoundedFunctions}]
Let $f\in C([0,2])$ be a continuous $2$-periodic function with partial
Fourier sums $S_N(f)(x)$.
The Ces\`{a}ro means
$\sigma_N(f)(x) = \frac{1}{N+1}\sum_{k=0}^N S_k(f)(x)$
converge uniformly to $f$ on $[0,2]$.
\end{theorem}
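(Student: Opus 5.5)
We prove this in the classical way: write the Ces\`{a}ro means as a convolution with the Fej\'{e}r kernel, then show that this kernel is an approximate identity. We work with period $2$, using the basis $e^{in\pi x}$ as in this appendix. The inner product is $\langle f,g\rangle=\tfrac12\int_0^2 f\bar g\,\mathrm{d}x$, and convolution is $(f*g)(x)=\tfrac12\int_{0}^{2}f(x-t)g(t)\,\mathrm{d}t$, with $f$ extended $2$-periodically.

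\textbf{Step 1: convolution form.} Each partial sum is a convolution, $S_n(f)=f*D_n$, with Dirichlet kernel $D_n(t)=\sum_{|m|\le n}e^{im\pi t}$. Averaging gives $\sigma_N(f)=f*F_N$ with $F_N=\frac{1}{N+1}\sum_{n=0}^N D_n$. Using $D_n(t)=\sin((n+\tfrac12)\pi t)/\sin(\pi t/2)$, multiplying the sum by $\sin(\pi t/2)$, and applying the product-to-sum identity, the sum telescopes to the closed form
\begin{equation}
  F_N(t)=\frac{1}{N+1}\left(\frac{\sin\bigl((N+1)\pi t/2\bigr)}{\sin(\pi t/2)}\right)^{2}.
\end{equation}

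\textbf{Step 2: approximate-identity properties.} From the closed form and the definition we read off three facts:
\begin{enumerate}
\item $F_N\ge 0$, since it is a square times a positive constant.
\item $\tfrac12\int_0^2 F_N=1$, since each $D_n$ has mean $1$: only the $m=0$ term survives integration.
\item For every $0<\delta<1$, $\sup_{\delta\le |t|\le 1}F_N(t)\le \frac{1}{(N+1)\sin^2(\pi\delta/2)}\to 0$. Here we take $t$ in the symmetric period $[-1,1]$.
\end{enumerate}

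\textbf{Step 3: uniform convergence.} Since $f$ is continuous and periodic, it is bounded by some $\|f\|_\infty$ and uniformly continuous. Given $\varepsilon>0$, choose $\delta$ with $|f(x-t)-f(x)|<\varepsilon$ whenever $|t|<\delta$. By property 2,
\begin{equation}
  |\sigma_N(f)(x)-f(x)|\le \tfrac12\int_{-1}^{1}|f(x-t)-f(x)|\,F_N(t)\,\mathrm{d}t .
\end{equation}
We split this integral into two regions.
\begin{enumerate}
\item On $|t|<\delta$: positivity and unit mass bound the contribution by $\varepsilon$.
\item On $\delta\le|t|\le1$: the contribution is at most $2\|f\|_\infty\sup_{|t|\ge\delta}F_N(t)$, which tends to $0$ by property 3.
\end{enumerate}
Neither bound depends on $x$, so $\limsup_N\|\sigma_N f-f\|_\infty\le\varepsilon$ for every $\varepsilon>0$. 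This gives uniform convergence on $[0,2]$.

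The main obstacle is Step 1, specifically the telescoping computation that yields the closed form of $F_N$. This closed form is what makes both positivity and the tail decay immediate. Everything after it is the standard approximate-identity argument.
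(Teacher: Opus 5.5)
The paper gives no proof of this statement. It imports Fej\'{e}r's theorem by citation as a black-box tool for the gate-complexity argument of Appendix~\ref{app:gate-complexity}, so there is no in-paper route to compare against.

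Your proposal is the standard Fej\'{e}r-kernel approximate-identity proof, and it is correct and complete as written. The telescoped closed form of $F_N$, positivity, unit mass, and the tail bound $F_N(t)\le 1/\bigl((N+1)\sin^2(\pi\delta/2)\bigr)$ for $\delta\le|t|\le 1$ all check out. The near/far split yields an estimate independent of $x$. Nothing in the argument uses real-valuedness of $f$, which matters because the paper applies the theorem to the complex-valued $a(x)=e^{iwx}$.

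One remark on scope: both the statement and your proof are purely qualitative. Step~2 of Appendix~\ref{app:gate-complexity}, however, asserts the rate $\mathcal{O}(M^{-1}\log M)$. That rate needs an extra ingredient that Fej\'{e}r's theorem alone does not supply. For example, you could insert the Lipschitz continuity of $a$ into your near-region integral, together with the bound $F_N(t)\le\min\{N+1,\,((N+1)\sin^2(\pi t/2))^{-1}\}$.
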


By the generalized quantum signal processing framework of
\citet{perez_Salinas_2025}, any polynomial $P(e^{ix})$ of degree
at most $M$ can be implemented in the circuit matrix; in particular,
$P_M(x)$ can be chosen as the Ces\`{a}ro mean of $e^{iwx}$.

\subsection{Proof of Corollary~\ref{cor:gate-complexity}}

\begin{proof}
\textbf{Step 1 (Coarse frequency coverage).}
Any target frequency $w\in\mathbb{R}$ decomposes as $w = K\pi + w'$
with $|w'|\leq\pi/2$.
Ternary coverage of the integer component requires $M\geq K$, giving
$k\geq\log_3(2w/\pi+1) = \mathcal{O}(\log_3 w)$.

\textbf{Step 2 (Fine precision).}
Define the auxiliary function $a(x) = e^{iwx}$ for $x\in[0,1]$ and
$a(x) = e^{iw(2-x)}$ for $x\in(1,2)$.
This function is continuous and $2$-periodic with Fourier coefficients
$c_n = -\tfrac{i}{w}\tfrac{((-1)^n e^{iw}-1)}{w^2-(n\pi)^2}$.
Constructing the Ces\`{a}ro mean $P_M(x)$ over all accessible
frequencies, the supremum approximation error satisfies
\begin{equation}
  \sup_{x\in[0,1]}|P_M(x) - e^{iwx}|
  \in \mathcal{O}(M^{-1}\log M)
  = \mathcal{O}(k\cdot 3^{-k}).
\end{equation}
The Frobenius norm error at the matrix level is therefore
$\sup_x\|R_M - e^{iwx\sigma_z}\|_F \in \mathcal{O}(\sqrt{k}\cdot 3^{-k/2})$.

\textbf{Step 3 (Precision requirement).}
Setting $C\sqrt{k}\cdot 3^{-k/2}\leq\varepsilon$ gives
$k = \mathcal{O}(\log_3(1/\varepsilon))$, absorbing doubly-logarithmic
factors.

\textbf{Step 4 (Total).}
Combining Steps 1 and 3:
$k \geq \max\{\mathcal{O}(\log_3 w),\,\mathcal{O}(\log_3(1/\varepsilon))\}
= \mathcal{O}(\log_3 w + \log_3(1/\varepsilon))$.
\end{proof}

\begin{remark}[Ansatz parameter requirements]
\label{rem:ansatz}
While $k = \mathcal{O}(\log_3\omega_{\max})$ encoding gates suffice,
independently controlling all Fourier coefficients over the accessible
range $[-M,M]$ requires $|\theta| = \mathcal{O}(M) =
\mathcal{O}(\omega_{\max})$ ansatz parameters~\citep{Schuld_2021}.
Total circuit complexity is therefore $\mathcal{O}(\omega_{\max})$
for both unary and ternary approaches; the advantage of ternary
encoding lies specifically in the encoding gate count.
\end{remark}

\section{Proofs of Main Theoretical Results}
\label{app:gradient-theory}

This appendix provides full proofs of the results stated in
Section~\ref{sec:theory}.
We restate each result before its proof for readability.

\paragraph{Circuit partition and notation.}
For each encoding gate $j$, partition the circuit as
\begin{equation}
  U(x,\theta,\alpha)
  = U_{\mathrm{after},j}(\theta_{\mathrm{after}},\alpha_{\setminus j})
  \cdot S_j(x,\alpha_j)
  \cdot U_{\mathrm{before},j}(\theta_{\mathrm{before}},\alpha_{\setminus j}),
  \label{eq:app-partition}
\end{equation}
where $\alpha_{\setminus j}$ denotes all prefactors except $\alpha_j$,
and neither $U_{\mathrm{before},j}$ nor $U_{\mathrm{after},j}$
depends on $\alpha_j$.
Recall the definitions of the pre-encoding state
$|\psi_j(x,\theta)\rangle := U_{\mathrm{before},j}|0\rangle$,
the pulled-back observable
$\widetilde{M}_j := U_{\mathrm{after},j}^\dagger M U_{\mathrm{after},j}$,
and the effective observable
$M_j^{\mathrm{eff}} := S_j^\dagger\widetilde{M}_j S_j$
from Section~\ref{sec:theory}.

\subsection{Proof of Lemma~\ref{lem:commutator}
(Commutator Identity)}

\begin{proof}
Using partition~\eqref{eq:app-partition}, write
$f(x) = \langle\psi_j|\,S_j^\dagger\,\widetilde{M}_j\,S_j\,|\psi_j\rangle$.
Since neither $|\psi_j\rangle$ nor $\widetilde{M}_j$ depends on
$\alpha_j$, the product rule gives
\begin{equation}
  \frac{\partial f}{\partial\alpha_j}
  = \Bigl\langle\psi_j\Bigm|
      \frac{\partial S_j^\dagger}{\partial\alpha_j}\,\widetilde{M}_j\,S_j
    \Bigm|\psi_j\Bigr\rangle
  + \Bigl\langle\psi_j\Bigm|
      S_j^\dagger\,\widetilde{M}_j\,\frac{\partial S_j}{\partial\alpha_j}
    \Bigm|\psi_j\Bigr\rangle.
\end{equation}
Direct differentiation of $S_j = e^{i\alpha_j x H_j}$ yields
$\partial_{\alpha_j}S_j = ixH_jS_j$ and
$\partial_{\alpha_j}S_j^\dagger = -ixS_j^\dagger H_j$.
Substituting:
\begin{equation}
  \frac{\partial f}{\partial\alpha_j}
  = ix\,\langle\psi_j|\,
    S_j^\dagger(-H_j\widetilde{M}_j + \widetilde{M}_jH_j)S_j
    \,|\psi_j\rangle
  = ix\,\langle\psi_j|\,
    S_j^\dagger[\widetilde{M}_j,\,H_j]S_j
    \,|\psi_j\rangle.
\end{equation}
Since $[H_j, S_j] = 0$ (the gate commutes with its own generator),
$S_j^\dagger H_j S_j = H_j$, so
$S_j^\dagger[\widetilde{M}_j, H_j]S_j
= [S_j^\dagger\widetilde{M}_jS_j,\,H_j]
= [M_j^{\mathrm{eff}},\,H_j]$,
completing the proof.
\end{proof}

\subsection{Proof of Lemma~\ref{lem:spectral-structure}
(Spectral Structure)}

\begin{proof}
\emph{Part~(i).}
Both $|\psi_j(x,\theta)\rangle$ and $\widetilde{M}_j(x,\theta)$
depend on $x$ through the encoding gates indexed by
$\alpha_{\setminus j}$, contributing Fourier factors
$e^{\pm i\alpha_\ell x H_\ell}$ for $\ell\neq j$.
The effective observable $M_j^{\mathrm{eff}} = S_j^\dagger
\widetilde{M}_j S_j$ acquires an additional conjugation by
$e^{\pm i\alpha_j xH_j}$.
All matrix elements of $M_j^{\mathrm{eff}}$ are therefore finite
linear combinations of terms $e^{i\omega x}$ with $\omega\in\Omega(\alpha)$,
and this inclusion is inherited by $[M_j^{\mathrm{eff}},H_j]$
and by the scalar expectation value $G_j(x)$, giving
$\Omega_j\subseteq\Omega(\alpha)$.

\emph{Part~(ii).}
The commutator satisfies
$\|[M_j^{\mathrm{eff}},H_j]\|_{\mathrm{op}}
\leq 2\|M_j^{\mathrm{eff}}\|_{\mathrm{op}}\|H_j\|_{\mathrm{op}}$.
Since conjugation by a unitary preserves operator norms,
$\|M_j^{\mathrm{eff}}\|_{\mathrm{op}} = \|M\|_{\mathrm{op}}$.
Applying to the unit-vector expectation value:
$|G_j(x)|\leq\|[M_j^{\mathrm{eff}},H_j]\|_{\mathrm{op}}
\leq 2\|M\|_{\mathrm{op}}\|H_j\|_{\mathrm{op}}$,
uniformly in $x$, $\theta$, and $\alpha$.

\emph{Part~(iii).}
Each Fourier coefficient satisfies
$|d_\nu(\theta)|
= \bigl|\frac{1}{2\pi}\int_{-\pi}^\pi G_j(x)e^{-i\nu x}\,\mathrm{d}x\bigr|
\leq \sup_x|G_j(x)|
\leq 2\|M\|_{\mathrm{op}}\|H_j\|_{\mathrm{op}}$,
uniformly in $\theta$.
\end{proof}

\subsection{Auxiliary Bound}
\label{app:aux-bound}

\begin{lemma}[Operator Norm Bound]
\label{lem:op-norm}
For any $\alpha$, $\theta$:
\begin{equation}
  |\partial_{\alpha_j}\mathcal{L}|
  \leq 2\pi\|M\|_{\mathrm{op}}\|H_j\|_{\mathrm{op}}\|f-h\|_{L^2}.
\end{equation}
\end{lemma}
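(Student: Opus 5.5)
The approach is to differentiate the loss directly, write the derivative as an $L^2$ inner product of the residual with the probe function $\partial_{\alpha_j}f$, and then apply Cauchy--Schwarz together with the pointwise bound of Lemma~\ref{lem:spectral-structure}(ii).

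\textbf{Differentiating the loss.} Since $\mathcal{L}=\frac{1}{2\pi}\int_{-\pi}^{\pi}(f-h)^2\,\mathrm{d}x$ and $h$ does not depend on $\alpha_j$, differentiation under the integral gives
\begin{equation*}
\partial_{\alpha_j}\mathcal{L}=\frac{1}{\pi}\int_{-\pi}^{\pi}(f-h)\,\partial_{\alpha_j}f\,\mathrm{d}x .
\end{equation*}
This step is justified because $f$ is a finite trigonometric sum with smooth dependence on $\alpha$.

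\textbf{Bounding the probe function.} By Lemma~\ref{lem:commutator}, $\partial_{\alpha_j}f=ix\,G_j(x)$. Lemma~\ref{lem:spectral-structure}(ii) then gives, for all $x\in[-\pi,\pi]$,
\begin{equation*}
|\partial_{\alpha_j}f(x)|\le |x|\cdot 2\|M\|_{\mathrm{op}}\|H_j\|_{\mathrm{op}}\le 2\pi\|M\|_{\mathrm{op}}\|H_j\|_{\mathrm{op}} .
\end{equation*}
The domain restriction here is Assumption~\ref{ass:symmetric-domain}.

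\textbf{Applying Cauchy--Schwarz.} Cauchy--Schwarz on $[-\pi,\pi]$ gives
\begin{equation*}
|\partial_{\alpha_j}\mathcal{L}|\le \frac{1}{\pi}\,\|f-h\|_{2}\,\|\partial_{\alpha_j}f\|_{2},
\end{equation*}
where $\|\cdot\|_2$ is the unnormalized $L^2[-\pi,\pi]$ norm. For the second factor,
\begin{equation*}
\|\partial_{\alpha_j}f\|_2^2=\int x^2|G_j|^2\le 4\|M\|^2\|H_j\|^2\int_{-\pi}^{\pi}x^2\,\mathrm{d}x=4\|M\|^2\|H_j\|^2\cdot\tfrac{2\pi^3}{3}.
\end{equation*}
Combining, I expect a constant of order $\frac{1}{\pi}\cdot 2\|M\|\|H_j\|\sqrt{2\pi^3/3}\,\|f-h\|_2$.

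\textbf{Matching the stated constant (the main obstacle).} The remaining work is to check that this constant is at most $2\pi$ under the paper's convention for $\|\cdot\|_{L^2}$. If $\|f-h\|_{L^2}$ is the normalized norm, i.e.\ $\|f-h\|_{L^2}^2=\frac{1}{2\pi}\int|f-h|^2$ so that $\|f-h\|_{L^2}^2=\mathcal{L}$, then $\|f-h\|_2=\sqrt{2\pi}\,\|f-h\|_{L^2}$. The constant becomes $\frac{2\sqrt{2\pi}}{\pi}\sqrt{2\pi^3/3}=\frac{4\pi}{\sqrt3}\approx 7.26$, which exceeds $2\pi$.

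To recover $2\pi$ I would instead bound crudely:
\begin{equation*}
|\partial_{\alpha_j}\mathcal{L}|\le \frac{1}{\pi}\sup_x|\partial_{\alpha_j}f|\int|f-h|\le 2\|M\|\|H_j\|\int|f-h| ,
\end{equation*}
and then use $\int|f-h|\le\sqrt{2\pi}\,\|f-h\|_2$.

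\textbf{Fallback if the constants still do not match.} If either convention fails to give exactly $2\pi$, I would treat this as the delicate point. I would then fix the normalization under which $\|f-h\|_{L^2}$ is meant; the unnormalized one gives constant $\frac{2}{\pi}\sqrt{2\pi^3/3}=2\sqrt{2\pi/3}<2\pi$. Under that convention the stated bound holds with room to spare, since only an upper constant is required.

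The key mechanism is simply that the probe $\partial_{\alpha_j}f$ is uniformly bounded by $\pi\cdot 2\|M\|_{\mathrm{op}}\|H_j\|_{\mathrm{op}}$, independent of $\theta$.
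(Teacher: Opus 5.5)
Your route is the paper's own. You pair the residual with the probe $\partial_{\alpha_j}f = ix\,G_j$, apply Cauchy--Schwarz, and bound the probe pointwise by Lemma~\ref{lem:spectral-structure}(ii). Your bookkeeping is more careful than the paper's, though. The paper writes $\partial_{\alpha_j}\mathcal{L} = \langle f-h,\partial_{\alpha_j}f\rangle_{L^2}$ and also asserts $\mathcal{L} = \|f-h\|_{L^2}^2$. That assertion fixes the normalized norm $\|u\|_{L^2}^2 = \frac{1}{2\pi}\int_{-\pi}^{\pi}u^2\,\mathrm{d}x$, and its step $\|\partial_{\alpha_j}f\|_{L^2}\le\sup_x|\partial_{\alpha_j}f|$ also only holds for that normalization. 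With this norm the derivative is $2\langle f-h,\partial_{\alpha_j}f\rangle_{L^2}$, so the paper reaches $2\pi$ only by dropping the factor $2$ from differentiating the square. The same factor is missing from $T_{\mathrm{tgt}}+T_{\mathrm{self}}$ in Proposition~\ref{prop:gap}. Once the factor is restored, the paper's sup-norm estimate gives $4\pi$, and your sharper $\int x^2$ estimate gives $4\pi/\sqrt{3}$.

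The obstacle you flagged is therefore real, and no rearrangement can remove it. Under the paper's normalization the inequality is false, and your $4\pi/\sqrt3$ is the best possible constant. Take one qubit with $H_j=\tfrac12\sigma_z$ and $M=\sigma_x$. Let the pre-encoding state be the $+1$ eigenstate of $\sigma_y$, with nothing after the encoding gate. Then $f(x)=\sin(\alpha_j x)$ and $\partial_{\alpha_j}f = x\cos(\alpha_j x)$. At $\alpha_j=0$ with target $h(x)=-2\sin x+\sin 2x$, $\|f-h\|_{L^2}=\sqrt{5/2}$ and
\begin{equation*}
\partial_{\alpha_j}\mathcal{L} = \frac{1}{\pi}\int_{-\pi}^{\pi}(2\sin x-\sin 2x)\,x\,\mathrm{d}x = 5,
\qquad
2\pi\|M\|_{\mathrm{op}}\|H_j\|_{\mathrm{op}}\|f-h\|_{L^2} = \pi\sqrt{5/2}\approx 4.97 .
\end{equation*}
By continuity the violation persists for small $\alpha_j\neq 0$. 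Replacing $-h$ by longer partial Fourier sums of $x$ pushes the ratio of the two sides up to $2/\sqrt3$.

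This has two consequences for your proposal. First, your crude $L^1$ step goes the wrong way: since $\|f-h\|_2=\sqrt{2\pi}\,\|f-h\|_{L^2}$, it yields $4\pi$, which is worse than Cauchy--Schwarz. Second, your unnormalized fallback is a correct proof, but of a different and weaker inequality. In that convention $\|f-h\|_2^2=2\pi\mathcal{L}$ rather than $\mathcal{L}$, so in the paper's terms you have proved the constant $(2\pi)^{3/2}$. The honest repair is one of two options. You can state the constant as $4\pi/\sqrt3$ for the normalized norm. Alternatively, include a factor $\tfrac12$ in the loss; then your Cauchy--Schwarz computation gives $2\pi/\sqrt3\le 2\pi$, and the paper's argument goes through with $\mathcal{L}=\tfrac12\|f-h\|_{L^2}^2$.
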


\begin{proof}
Writing the gradient as an $L^2([-\pi,\pi])$ inner product,
\begin{equation}
  \frac{\partial\mathcal{L}}{\partial\alpha_j}
  = \langle f-h,\;\partial_{\alpha_j}f\rangle_{L^2}.
\end{equation}
Cauchy--Schwarz gives
$|\partial_{\alpha_j}\mathcal{L}|
\leq\|f-h\|_{L^2}\|\partial_{\alpha_j}f\|_{L^2}$.
By Lemma~\ref{lem:commutator} and
Lemma~\ref{lem:spectral-structure}(ii),
$|\partial_{\alpha_j}f(x)|
= |x||G_j(x)|
\leq \pi\cdot 2\|M\|_{\mathrm{op}}\|H_j\|_{\mathrm{op}}$
for all $x\in[-\pi,\pi]$,
so $\|\partial_{\alpha_j}f\|_{L^2}
\leq 2\pi\|M\|_{\mathrm{op}}\|H_j\|_{\mathrm{op}}$.
Since $\mathcal{L} = \|f-h\|_{L^2}^2$, the result follows.
\end{proof}

\begin{remark}
Lemma~\ref{lem:op-norm} provides a baseline bound that does not
capture the spectral gap structure.
Proposition~\ref{prop:gap} gives a sharper characterization by
expanding the gradient in Fourier modes and applying the gap
$\delta(\alpha,\Omega_h)$ explicitly.
\end{remark}

\subsection{Proof of Proposition~\ref{prop:gap}
(Spectral Gap Gradient Suppression)}

\begin{proof}
Substituting Lemma~\ref{lem:commutator} and expanding in Fourier
modes $h(x) = \sum_{\omega\in\Omega_h}h_\omega e^{i\omega x}$ and
$G_j(x) = \sum_{\nu\in\Omega_j}d_\nu e^{i\nu x}$:
\begin{equation}
  T_{\mathrm{tgt}}
  = -i\!\!\!\!\sum_{\substack{\omega\in\Omega_h\\\nu\in\Omega_j}}
    \!\!\!\! h_\omega d_\nu
    \underbrace{\frac{1}{2\pi}\int_{-\pi}^\pi
      x\,e^{i(\omega+\nu)x}\,\mathrm{d}x}_{=:\,I(\omega+\nu)}.
  \label{eq:T-tgt}
\end{equation}

\emph{Step 1: Evaluate $I(n)$.}
For $n\in\mathbb{R}\setminus\{0\}$, integration by parts gives
\begin{equation}
  I(n) = \frac{\cos(n\pi)}{in} - \frac{\sin(n\pi)}{\pi n^2},
  \qquad
  |I(n)| \leq \frac{1}{|n|} + \frac{1}{\pi|n|} \leq \frac{2}{|n|}.
  \label{eq:In-bound}
\end{equation}
For $n = 0$: $I(0) = \frac{1}{2\pi}\int_{-\pi}^\pi x\,\mathrm{d}x
= 0$ by oddness of $x$ on $[-\pi,\pi]$
(Assumption~\ref{ass:symmetric-domain}), so $n=0$ terms contribute
nothing to~\eqref{eq:T-tgt}.

\emph{Step 2: Apply the spectral gap.}
For all pairs with $\omega+\nu\neq 0$, note that the accessible
spectrum $\Omega(\alpha)$ is symmetric: if $\nu\in\Omega(\alpha)$
then $-\nu\in\Omega(\alpha)$.
Therefore $|\omega+\nu| = |\omega - (-\nu)|$, and since
$-\nu\in\Omega(\alpha)$, this equals the distance from $\omega\in\Omega_h$
to an element of $\Omega(\alpha)$, which is at least
$\delta(\alpha,\Omega_h)$ by definition.
Substituting~\eqref{eq:In-bound} into~\eqref{eq:T-tgt}:
\begin{equation}
  |T_{\mathrm{tgt}}|
  \leq \frac{2}{\delta(\alpha,\Omega_h)}
    \sum_{\omega\in\Omega_h}|h_\omega|
    \cdot\sum_{\nu\in\Omega_j}|d_\nu(\theta)|.
  \label{eq:T-mid}
\end{equation}

\emph{Step 3: Bound $\sum_\nu|d_\nu|$.}
By Lemma~\ref{lem:spectral-structure}(iii),
$|d_\nu(\theta)|\leq 2\|M\|_{\mathrm{op}}\|H_j\|_{\mathrm{op}}$
uniformly in $\theta$.
With $|\Omega_j|\leq K(\alpha)$:
\begin{equation}
  \sum_{\nu\in\Omega_j}|d_\nu(\theta)|
  \leq K(\alpha)\cdot 2\|M\|_{\mathrm{op}}\|H_j\|_{\mathrm{op}}.
\end{equation}
Substituting into~\eqref{eq:T-mid} yields the bound.
Uniformity in $\theta$ follows from
Lemma~\ref{lem:spectral-structure}(iii).
\end{proof}

\begin{remark}[Self-interaction carries no target information]
The self-interaction term
$T_{\mathrm{self}} = \frac{1}{2\pi}\int f\,\partial_{\alpha_j}f\,\mathrm{d}x$
involves only cross-products of modes within $\Omega(\alpha)$ and
contains no information about the location of $\Omega_h$.
Applying~\eqref{eq:In-bound} with the minimum nonzero intra-spectrum
cross-sum $|\omega+\nu|\geq 1$ for integer-frequency circuits gives
$|T_{\mathrm{self}}|\leq
2K(\alpha)^2\|M\|_{\mathrm{op}}^2\|H_j\|_{\mathrm{op}}$,
a constant independent of $\delta(\alpha,\Omega_h)$.
Modifying the ansatz initialization therefore cannot overcome the
spectral gap suppression in Proposition~\ref{prop:gap}.
\end{remark}

\subsection{Proof of Corollary~\ref{cor:displacement}
(Prefactor Displacement Bound)}

\begin{proof}
By the triangle inequality,
\begin{equation}
  |\alpha_j^{(T)} - \alpha_j^{(0)}|
  \leq \sum_{t=0}^{T-1}\eta\left|\frac{\partial\mathcal{L}}
    {\partial\alpha_j}\right|^{(t)}
  \leq \eta T\cdot\sup_t
    \left|\frac{\partial\mathcal{L}}{\partial\alpha_j}\right|^{(t)}.
\end{equation}
Bounding the gradient at each step by the decomposition
$|\partial_{\alpha_j}\mathcal{L}|
\leq |T_{\mathrm{tgt}}| + |T_{\mathrm{self}}|$,
applying Proposition~\ref{prop:gap} to $|T_{\mathrm{tgt}}|$ and the
self-interaction bound to $|T_{\mathrm{self}}|$, and evaluating at $\alpha^{(0)}$ (self-consistent in the failure
regime: since $\delta(\alpha^{(0)},\Omega_h)$ is large, the
target-driven gradient is suppressed, prefactors move little,
and $\delta$ remains close to its initial value throughout
training) yields the stated bound.
\end{proof}

\end{document}